\title{Tractable Computation of Expected Kernels}
\author[1]{Wenzhe Li\thanks{Authors contributed equally. This research was performed while W.L.~was visiting UCLA remotely.}}
\author[2]{Zhe Zeng$^{*}$}
\author[2]{Antonio Vergari}
\author[2]{Guy Van den Broeck}
\affil[1]{%
    Tsinghua University    
}
\affil[2]{%
    University of California, Los Angeles
    \begin{tabular}{c}
    \texttt{scott.wenzhe.li@gmail.com}, \texttt{\{zhezeng, aver, guyvdb\}@cs.ucla.edu}
    \end{tabular}
}
\newcommand{\X}{\ensuremath{\mathbf{X}}\xspace}
\newcommand{\xdomain}{\ensuremath{\mathcal{X}}\xspace}
\newcommand{\ydomain}{\ensuremath{\mathcal{Y}}\xspace}
\newcommand{\Xs}{\ensuremath{\mathbf{X_s}}\xspace}
\newcommand{\Xc}{\ensuremath{\mathbf{X_c}}\xspace}
\newcommand{\x}{\ensuremath{\mathbf{x}}\xspace}
\newcommand{\xs}{\ensuremath{\mathbf{x_s}}\xspace}
\newcommand{\xc}{\ensuremath{\mathbf{x_c}}\xspace}
\newcommand{\s}{\ensuremath{\mathbf{s}}\xspace}
\newcommand{\co}{\ensuremath{\mathbf{c}}\xspace}
\newcommand{\expec}{\ensuremath{\mathbb{E}}\xspace}
\newcommand{\bigO}{\mathcal{O}}
\newcommand{\steinop}{\ensuremath{\mathcal{T}_p}\xspace}
\newcommand{\trace}{\ensuremath{\mathit{tr}}\xspace}
\newcommand{\ckdsd}{CKDSD\xspace}
\newcommand{\doublesum}{\ensuremath{M}\xspace}
\newcommand{\real}{\ensuremath{\mathbb{R}}\xspace}
\newcommand{\mmd}{\ensuremath{\mathit{MMD}}\xspace}
\newcommand{\wl}[1]{}
\newcommand{\zz}[1]{}
\newcommand{\guy}[1]{}
\newcommand{\av}[1]{}
\definecolor{lacamlilac} {RGB} {107,93,153}
\definecolor{gold2} {RGB} {255, 130, 0}
\definecolor{petroil2} {RGB} {36, 165, 175}
\definecolor{pink4} {HTML} {EC407A}
\definecolor{olive4} {HTML} {7CB342}
\newcommand{\vv}[1]{\ensuremath{\boldsymbol{#1}}\xspace}
\newcommand{\xp}{\ensuremath{\mathbf{x}^\prime}\xspace}
\newcommand{\R}{\ensuremath{\mathbb{R}}\xspace}
\newcommand{\F}{\ensuremath{\mathcal{F}}\xspace}
\newcommand{\ksd}{\ensuremath{\mathbb{S}}\xspace}
\newcommand{\p}{{p}\xspace}
\newcommand{\q}{{q}\xspace}
\newcommand{\f}{{f}\xspace}
\newcommand{\g}{{g}\xspace}
\newcommand{\ch}{\ensuremath{\mathsf{in}}\xspace}
\newcommand{\leftn}{\ensuremath{\mathsf{L}}\xspace}
\newcommand{\rightn}{\ensuremath{\mathsf{R}}\xspace}
\newcommand{\score}{\vv{s}_p\xspace}
\newcommand{\scorei}{\vv{s}_{p, i}\xspace}
\newcommand{\sampleid}{\ensuremath{\mathcal{S}}\xspace}
\newcommand{\condid}{\ensuremath{\mathcal{C}}\xspace}
\newcommand{\idx}[1]{{(#1)}}
\DeclareMathOperator*{\argmin}{arg\,min}
\newtheorem{thm}{Theorem}[section]
\newtheorem{cor}[thm]{Corollary}
\newtheorem{mydef}[thm]{Definition}
\newtheorem{pro}[thm]{Proposition}
\begin{document}
\maketitle

\begin{abstract}
Computing the expectation of kernel functions is a ubiquitous task in machine learning, with applications from classical support vector machines to exploiting kernel embeddings of distributions in probabilistic modeling, statistical inference, causal discovery, and deep learning.
In all these scenarios, we tend to resort to Monte Carlo estimates as expectations of kernels are intractable in general.
In this work, we characterize the conditions under which we can compute expected kernels exactly and efficiently,
by leveraging recent advances in probabilistic circuit representations.
We first construct a circuit representation for kernels and propose an approach to such tractable computation.
We then demonstrate possible advancements for kernel embedding frameworks by exploiting tractable expected kernels to derive new algorithms for two challenging scenarios:
1) reasoning under missing data with kernel support vector regressors;
2) devising a collapsed black-box importance sampling scheme.
Finally, we empirically evaluate both algorithms and show that they outperform standard baselines on a variety of datasets.

\end{abstract}

\section{Introduction}
\label{sec: intro}
Kernel functions have been prominent in the machine learning community for decades. 
Kernels provided a convenient notion of inner product for high-dimensional feature maps~\citep{cortes1995support,scholkopf1998nonlinear} and have been extended to represent distributions as elements in a reproducing kernel Hilbert space~(RKHS). 
They have contributed to various fundamental tasks including sample testing~\citep{gretton2012kernel,DBLP:conf/nips/JitkrittumX0FG17}, group anomaly detection~\citep{muandet2013one} and 
causal discovery~\citep{chen2014causal}.

One fundamental computation that naturally arises in these kernel-embedding based frameworks is to compute the expectations of a kernel function w.r.t. distributions over its inputs. 
For instance, it arises in integral probability metrics~(IPMs)~\citep{muller1997integral} when the functional space is chosen as an RKHS and distributions are characterized by 
their
kernel embeddings.
However, such expectations are computationally hard in general and most existing methods resort to Monte Carlo estimators for approximation.

In this paper, we investigate how to derive a tractable algorithm to compute these kernel expectations, thus enabling the aforementioned frameworks to perform exact inference without relying on unreliable approximations.
We do so by leveraging recent advances in tractable probabilistic modeling.
Specifically, our algorithmic contribution will take advantage of representing both the kernels and the input distributions participating in the expectation as \textit{circuits}. 

Circuit representations~\citep{tutorial-pc,choi2020pc} reconcile and abstract from the different graphical and syntactic representations of both classical tractable probabilistic models such as mixture models (e.g., mixtures of Gaussian distributions), bounded-treewidth graphical models~\citep{koller2009probabilistic,meila2000learning} and more recent ones such as 
probabilistic circuits~\citep{choi2020pc,vergari2021compositional} like
arithmetic circuits~\citep{darwiche2003differential}, probabilistic sentential decision diagrams (PSDDs)~\citep{kisa2014probabilistic}, sum-product networks (SPNs)~\citep{poon2011sum}, and cutset networks~\citep{rahman2014cutset}.
As such, our analysis within the framework of circuit representations will help trace the boundaries of tractable computations of kernel expectations, delivering a general and efficient scheme that can be flexibly applied to many kernel-embedding scenarios and different tractable probabilistic model formalisms.

For this representation language, we characterize under which structural constraints on kernel functions and probability distributions the expectations of kernels can be computed exactly and efficiently.
We show how kernel functions can be represented as circuits with the requisite structural properties, and construct a recursive algorithm that delivers the tractable computation of their expectation in time polynomial in the size of the circuit representations.

Moreover, we demonstrate how the tractable computation of expected kernels can serve as a powerful tool to derive novel kernel-based algorithms on two challenging tasks when using 
 kernel embeddings to represent features as well as distributions. 
The first is to enable kernel support vector regressors to deal with missing data by computing their  expected predictions~\citep{anderson2011expected,khosravi2019tractable}. 
In the second, we derive a novel collapsed black-box importance sampling scheme using the kernelized Stein discrepancy~\citep{liu2016black} for efficient approximate inference over factor graph models that do not have a tractable representation. 
We compare each algorithm with existing baselines on different real-world datasets and problems, showing that our exact expected kernels yield  better inference performance.

\section{Expected Kernels}
\label{sec: expected kernels}
We use uppercase letters $X$ for random variables and lowercase letters $x$ for their assignments. Analogously, we denote a set of random variables in bold uppercase $\X$ and their assignments in bold lowercase $\x$. 
The domain of variables $\X$ is denoted by $\xdomain$.
The cardinality of $\xdomain$ is denoted by $|\xdomain|$.

We are interested in the modular operation of computing expected kernels. This task naturally arises in various kernel-embedding based frameworks.
\begin{mydef}[Expected Kernel]
\label{def: expected kernel}
Given two distributions $\p$ and $\q$ over variables $\X$ on domain $\xdomain$, and a 
positive definite 
kernel function $k: \xdomain \times \xdomain \rightarrow \real$, 
the expected kernel, that is,
the expectation of the kernel function $k$ with respect to the distributions $p$ and $q$ is defined as follows.
\begin{equation}
\doublesum_{k}(p, q) := \expec_{\x \sim p, \xp \sim q}[k(\x, \xp)]
\label{eq:exp-kernels}
\end{equation}
\end{mydef}
Expected kernels are omnipresent in 
machine learning.
For instance, one of the most well-known 
IPMs, the squared maximum mean discrepancy (MMD)~\citep{gretton2012kernel}
is defined as
$\mmd^2[\mathcal{H}, p, q] = \doublesum_k(p, p) + \doublesum_k(q, q) - 2 \doublesum_k(p, q)$ and measures the distance between two distributions $p$ and $q$ whose embeddings via a kernel $k$ live in a RKHS $\mathcal{H}$.
However,
the computation cost of expected kernels is prohibitive in general, even for distributions that are tractable for other inference scenarios, as the next theorem illustrates.
\begin{thm}
There exist representations of distributions $p$ and $q$ that are tractable for computing marginal, conditional, and maximum a-posteriori (MAP) probabilities, yet computing the expected kernel of a simple kernel $k$ that is the Kronecker delta is already \#P-hard.
\label{thm: hardness for expected kernels}
\end{thm}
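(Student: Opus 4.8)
The plan is to prove the statement by reduction from computing the permanent of a $0/1$ matrix --- equivalently, counting perfect matchings in a bipartite graph --- which is $\#\mathsf{P}$-complete by Valiant's theorem. The first observation is that for the Kronecker-delta kernel $k(\x,\xp) = 1$ if $\x = \xp$ and $0$ otherwise (this $k$ is positive definite, as its Gram matrix is the identity), the expected kernel collapses to the inner product of the two distributions viewed as vectors,
\begin{equation}
\doublesum_{k}(p,q) \;=\; \expec_{\x\sim p,\, \xp\sim q}[k(\x,\xp)] \;=\; \sum_{\x \in \xdomain} p(\x)\,q(\x).
\end{equation}
So it suffices to exhibit, for an arbitrary bipartite graph, a pair $(p,q)$ of distributions that are tractable for marginal, conditional and MAP queries and whose inner product encodes the number of perfect matchings of that graph.

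For the construction, given a bipartite graph $G = (U \cup W, E)$ with $|U| = |W| = n$ and matrix $A\in\{0,1\}^{n\times n}$ where $A_{ij}=1$ iff $(u_i,w_j)\in E$, I would introduce $n^2$ Boolean variables $\X = \{X_{ij}\}_{i,j\in[n]}$, so that an assignment is a $0/1$ matrix. Let $p$ be uniform over the set $S_p$ of assignments in which every \emph{row} $i$ has exactly one entry equal to $1$, located at some column $j$ with $(u_i,w_j)\in E$; and let $q$ be uniform over the set $S_q$ of assignments in which every \emph{column} $j$ has exactly one entry equal to $1$, located at some row $i$ with $(u_i,w_j)\in E$. (If some vertex has degree $0$ the permanent is $0$ and is detected directly, so assume all degrees are positive; then $|S_p| = \prod_i \deg(u_i)$ and $|S_q| = \prod_j \deg(w_j)$ are computable in polynomial time.) An assignment lies in $S_p \cap S_q$ exactly when it is a permutation matrix supported on $E$, i.e. a perfect matching of $G$, so $|S_p \cap S_q| = \mathrm{perm}(A)$ and hence
\begin{equation}
\doublesum_{k}(p,q) \;=\; \sum_{\x} p(\x)\,q(\x) \;=\; \frac{|S_p \cap S_q|}{|S_p|\;|S_q|} \;=\; \frac{\mathrm{perm}(A)}{\prod_i \deg(u_i)\;\prod_j \deg(w_j)}.
\end{equation}
A polynomial-time algorithm for the expected kernel would therefore compute $\mathrm{perm}(A)$ in polynomial time, contradicting $\#\mathsf{P}$-hardness.

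The remaining --- and I expect most delicate --- step is to verify that $p$ and $q$ genuinely admit compact representations tractable for all three query types simultaneously, since this is precisely the hypothesis the theorem places on the input distributions; the creative part of the argument is threading that needle against the hardness side. Here it follows cleanly from the design: $p$ factorizes as a product over the $n$ disjoint variable blocks $\{X_{i1},\dots,X_{in}\}$, and each block distribution is the uniform mixture of the $\deg(u_i)$ one-hot assignments on the neighbors of $u_i$ --- a tiny circuit that is smooth, decomposable and \emph{deterministic}, since distinct one-hot indicators have disjoint support. A product over disjoint scopes of deterministic decomposable circuits is again smooth, decomposable and deterministic, so $p$ (and symmetrically $q$) is representable by a polynomial-size deterministic and decomposable circuit. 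Smoothness and decomposability yield tractable marginals, hence conditionals; adding determinism yields tractable MAP. Thus both $p$ and $q$ satisfy all the tractability requirements while their expected Kronecker-delta kernel is a normalized $\#\mathsf{P}$-hard quantity. Conceptually, the reduction exploits that $p$ and $q$ are individually tractable but structurally incompatible --- $p$ is organized ``by rows'', $q$ ``by columns'' --- so their product, over which $\doublesum_{k}$ sums, need not have any small tractable circuit.
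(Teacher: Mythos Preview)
Your proof is correct. Both you and the paper start from the same observation that for the Kronecker-delta kernel the expected kernel collapses to $\sum_{\x}p(\x)q(\x)$, but from there the arguments diverge. The paper's proof is essentially a one-line reduction: it takes $p$ and $q$ to be structured-decomposable and deterministic PCs (for which marginals, conditionals, and MAP are all tractable) and then invokes Theorem~B.1 of \citet{vergari2021compositional}, which already establishes that computing $\sum_{\x}p(\x)q(\x)$ for such circuits is \#P-hard. An alternative proof in the appendix instead sets $q=p$ for a smooth, decomposable PC and appeals to the known hardness of $\sum_{\x}p^{2}(\x)$.

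Your route is more explicit and self-contained: you build the reduction from the permanent directly, engineering $p$ to factor over rows and $q$ over columns so that each is a product of small deterministic mixtures with disjoint scopes---hence smooth, decomposable, deterministic, and polynomial-size---while their inner product counts perfect matchings. What this buys is that no external hardness lemma is needed, and the construction makes transparent \emph{why} the hardness arises: $p$ and $q$ are individually structured-decomposable but conform to incompatible vtrees (row-wise versus column-wise), so their pointwise product admits no small compatible circuit. The paper's citation-based proof is terser but hides exactly this structural insight inside the referenced theorem; your version would serve well as the proof of that cited result itself.
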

Concretely, we show that this is true for probabilistic circuit representations, which unify several tractable probabilistic model representations.
We defer the proof of the above statement to Section~\ref{sec: tractable computation of expected kernels} after circuits are introduced.

The most commonly adopted solution to estimating Equation~\ref{eq:exp-kernels} and circumventing its computational challenge is to approximate it by sampling.
Instead, we are interested in defining a large model class guaranteeing its tractable computation and thus providing an efficient algorithm to compute it exactly.
We will show that this is possible by leveraging circuit representations of functions. %
In summary, 
we first adopt the probabilistic circuit representations for distributions, and further build a circuit representation for kernel functions to allow an exact computation of the expected kernels to be described in circuit operations.
Then, we exploit the structural constraints on circuits such that the computational complexity can be bounded to be polytime in the size of circuits.
The necessary background on circuits is presented in Section~\ref{sec: circuit representation} and the tractable computation of expected kernels is demonstrated in Section~\ref{sec: tractable computation of expected kernels}.

\paragraph{Expected Kernels in Action}
Our proposed tractable computation of expected kernel can be applied to expressive distribution families and it can potentially lead to new advances in kernel-based frameworks. 
To demonstrate this, we show how tractable expected kernels give rise to novel algorithms for two challenging tasks,
where the kernels serve as \emph{embeddings for features} in one algorithm, and as \emph{embeddings for distributions} in the other,
covering the two most popular usages of kernel functions.
The first one is to reason about kernel-based support regression models 
in the presence of missing features.
The second one is to perform black-box importance sampling with collapsed samples,
where expected kernels are leveraged to obtain the kernelized discrepancy between collapsed samples, which further gives the optimal importance weights. We will show the detailed descriptions of the proposed algorithms in Section~\ref{sec: expected kernels in action} and their empirical evaluation in Section~\ref{sec: empirical evaluation}.

\section{Circuit Representation}
\label{sec: circuit representation}

\begin{figure*}[t]
\centering
\begin{subfigure}[t]{0.32\textwidth}
\centering
\includegraphics[page=2,width=0.95\textwidth]{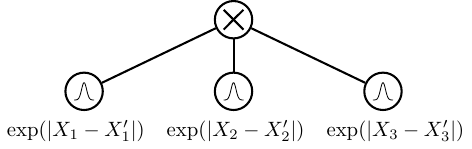}
\caption{a structure-decomposable PC}
\label{fig: pc example 1}
\end{subfigure}
\begin{subfigure}[t]{0.32\textwidth}
\centering
\includegraphics[page=3,width=0.95\textwidth]{uai-circuits-figs.pdf}
\caption{a PC compatible with the PC in (a)
}
\label{fig: pc example 2}
\end{subfigure}
\begin{subfigure}[t]{0.35\textwidth}
\centering
\includegraphics[page=4,width=0.95\textwidth]{uai-circuits-figs.pdf}
\caption{a KC kernel-compatible with (a) and (b)}
\label{fig: kc example}
\end{subfigure}
\caption{\emph{Examples of circuit representations.} Units in the computational graph include sum units, product units, univariate input distribution units represented with a circle and labeled by their scopes, and non-linear input function units represented with a curve and labeled by the input functions. Sum parameters are omitted for visual clarity. The feed-forward evaluation (input before outputs) is intended from left to right. The rightmost unit is the output of the circuit. All product nodes are colored according to their scopes: $\{X_1, X_2\}$ in \textcolor{pink4}{pink}, $\{X_1, X_2, X_3\}$ in \textcolor{olive4}{green}, and $\X$ in \textcolor{gold2}{orange}.}
\end{figure*}

Circuits are parameterized representations of functions as computational graphs.
They provide a language to characterize the tractability of function operations in terms of structural constraints over these computational graphs.
Next we first introduce circuits and their properties.

\begin{mydef}[Circuit]
\label{def: circuits}
A circuit $\f$ over variables $\X$ is a parameterized computational graph encoding a function $\f(\X)$
and comprising three kinds of computational units:
\textit{input}, \textit{product}, and \textit{sum}. 
Each inner unit $n$ (i.e., product or sum unit) receives inputs from some other units, denoted $\ch(n)$.
Each unit $n$ encodes a function $\f_{n}$ as follows:
\begin{equation*}
{\f}_n(\phi(n))=
\begin{cases}
    l_n({\phi(n)}) &\text{if $n$ is an input unit} \\
    \prod_{c\in\ch(n)} \f_c(\phi(c)) &\text{if $n$ is a product unit} \\
    \sum_{c\in\ch(n)} \theta_{c} \f_c(\phi(c)) &\text{if $n$ is a sum unit}
\end{cases}
\label{eq: EVI}
\end{equation*}
where $\theta_{c}\in\R$ are the parameters associated with each sum node,
and input units encode parameterized functions $l_n$ over variables $\phi(n) \subseteq \X$, also called their \textit{scope}.
The scope of an inner unit
is the union of the scopes of its inputs:  $\phi(n)=\bigcup_{c \in \ch(n)}\phi(c)$.
The final output unit (the root of the circuit) encodes $\f(\X)$.
\end{mydef}

Circuits can be understood as compact representations of polynomials, whose indeterminates are the functions encoded by the input units.
They are assumed to be simple enough to allow locally tractable computations which further forms global operations with tractability guarantees.

Most well-known circuit classes are various forms of probabilistic circuits~(PCs)~\citep{tutorial-pc,choi2020pc}.
PCs provide a unified framework where probabilistic inference operations are cleanly mapped to the circuit representations.
As such, they abstract from the many graphical formalism for tractable probabilistic models, from classical shallow mixtures~\citep{koller2009probabilistic,meila2000learning} to more recent deep variants~\citep{poon2011sum,peharz2020einsum}.
Specifically,
a PC
encodes a (possibly unnormalized) probability distribution over a collection of variables in a recursive manner.

\begin{mydef}[Probabilistic Circuits]
A PC on domain $\xdomain$ is a circuit encoding a non-negative function $\p: \xdomain \rightarrow \real^{\geq 0}$.
\end{mydef}

A circuit $\p$ can be evaluated
in time linear in its size denoted by $|\p|$, i.e., the number of edges in its computational graph.
For example,
computing $\p(\X=\x)$ in a PC can be done in a \textit{feedforward} way, evaluating input units before outputs, and hence in time linear in the size of the PC.

W.l.o.g., we will assume that units in circuits alternate layerwise between sum and product units and that every product unit receives only two inputs.
Both requirements can be easily enforced in any circuit structure with a polynomial increase in its size~\citep{peharz2020einsum,vergari2015simplifying}.
Furthermore, in this work we focus on discrete variables.
For conciseness, we denote the circuit by the same notation as the function that it represents, for instance, a PC $\p$ refers to the circuit representation of the distribution $\p$.

\paragraph{Properties of Circuits}
The tractability of computing quantities of interest involving the function encoded in a circuit, also called queries, can be characterized by \emph{structural constraints} on the computational graph of its circuit \citep{darwiche2002knowledge}. 
Next we introduce the structural properties that will be sufficient for the tractable computation of the expected kernels.
We refer the interested reader to~\citet{choi2020pc} for additional properties enabling other tractable inference scenarios.

\begin{mydef}[Smoothness]
A circuit is \textit{smooth}, if for every sum node $n$, 
its inputs $\ch(n)$ share the same scope, 
i.e., $\forall c, c^\prime\in\ch(n), \phi(c)=\phi(c^\prime)$.
\end{mydef}
Some examples of smooth circuits are mixture models: they comprise a single sum node over tractable input distributions that have to share the same scope.
For example, a Gaussian mixture model (GMM) can be represented as a smooth circuit with a single sum unit and several input units, each of which encodes a (multivariate) Gaussian density defined over the same set of variables.

\begin{mydef}[Determinism]
A circuit is \textit{deterministic} if the inputs of every sum unit have disjoint supports.
\end{mydef}

Determinism in PCs enables the tractable computation of MAP inference.
In this work, determinism will play a role in exactly computing the KSD between discrete distributions (see Corollary \ref{cor: tractable kdsd}).
\begin{mydef}[Decomposability]
A circuit is \textit{decomposable}, if for every product node $n$, its inputs $\ch(n)$ have disjoint scopes,
i.e., $\forall c, c^\prime\in\ch(n), c \neq c^\prime: \phi(c)\cap\phi(c^\prime)=\emptyset$.
\end{mydef}
Decomposable product nodes encode local factorizations.
For example, a decomposable product node $n$ over variables $\X$ with inputs from two units can be written as $\f_{n}(\X)=\f_{\leftn}(\X_{\leftn})\f_{\rightn}(\X_{\rightn})$, where
$\X_{\leftn}$ and $\X_{\rightn}$
form a partition of $\X$.
Taken together, smoothness and decomposability are 
sufficient and necessary
for performing tractable integration over arbitrary sets of variables in a single feedforward pass, 
which allows to compute marginals and conditionals 
in time linear in the circuit size~\citep{choi2020pc}.
To characterize tractable kernel expectations, we will need the multiple circuits participating in it to have product units that decompose their scopes in a ``synchronized'' way.
This property, called compatibility, is formalized recursively as follows.

\begin{mydef}[Compatibility]
Two circuits $\f$ and $\g$ are compatible if 
(i) they are smooth and decomposable, and
(ii) for any pair of product units $n \in \f$ and $m \in \g$ that share the same scope, they decompose in the same way, i.e.,
for every unit $c \in \ch(n)$, there must exist a unique unit $c^\prime \in ch(m)$ such that $ \phi(c)=\phi(c^\prime)$.
\end{mydef}

\begin{mydef}[Structured-decomposability]
A circuit is structured-decomposable if it is compatible with itself.
\end{mydef}
Notice that structured-decomposable circuits are a strict subclass of decomposable circuits.
An example of a structured-decomposable PC is shown in Figure~\ref{fig: pc example 1}.
The way that a structured-decomposable circuit hierarchically partitions its scope can be compactly represented by a graph called
\emph{vtree}~\citep{pipatsrisawat2008new}, \emph{pseudo-forest}~\citep{jaeger2004probabilistic} or \emph{pseudo-tree}~\citep{dechter2007and}.
In a nutshell, compatible structured-decomposable circuits conform to the same hierarchical partitioning over their variables.
Figure~\ref{fig: pc example 1} and Figure~\ref{fig: pc example 2} show two compatible PCs.
This additional requirement enables also the tractable computation of moments of predictive models~\citep{khosravi2019tractable} and the probability of logical constraints~\citep{bekker2015tractable,choi2015tractable}.

\paragraph{Construction of PCs}
As mentioned before,  several classes of tractable probabilistic graphical models (PGMs) including Chow-Liu trees~\citep{chow1968approximating} and hidden Markov models (HMMs)~\citep{rabiner1986introduction} can be represented as compact PCs with certain structural properties.
The process of translating one graphical representation into a circuit  is called \textit{compilation} and has received much attention in the literature~\citep{chavira2005compiling,darwiche2011sdd}.
In particular, \citet{shen2016tractable} propose a very efficient compilation scheme that compiles a factor graph into a structured-decomposable PC by first representing each factor as a PC and then multiplying them together.

Besides compiling PCs from other tractable models, we can also directly learn PCs from data~\citep{lowd2012learning,rooshenas2014learning,peharz2020einsum}.
Recently learning algorithms tailored towards structured-decomposable PCs have been proposed~\citep{LiangXAI17,DangPGM20}. 
For our experiments we will employ \textsc{Strudel}~\citep{DangPGM20} for its simplicity and speed.

\section{TRACTABLE COMPUTATION OF EXPECTED KERNELS}
\label{sec: tractable computation of expected kernels}

Computing expected kernels
is a \#P-hard problem in general. It involves summation over exponentially many states in the distribution space.
We first provide a formal proof for the hardness statement provided in Theorem~\ref{thm: hardness for expected kernels}.

\begin{proof}{[\textbf{Theorem~\ref{thm: hardness for expected kernels}}]}
Consider the case when $p$ and $q$ are both structured-decomposable and deterministic probabilistic circuits, and the positive definite kernel $k$ is a Kronecker delta function defined as $k(\x, \xp) = 1$ if and only if $\x = \xp$. 
Then computing the expected kernel $\doublesum_k(p, q)$ is equivalent to computing the quantity $\sum_{\x \in \xdomain} p(\x) q(\x)$,
which has been shown to be \#P-hard by \cite{vergari2021compositional}.
Therefore, computing the expected kernel is \#P-hard.
\end{proof}

From the proof we can tell that
mild structural constraints on circuits are not enough to reduce the computational complexity.
We provide another proof in Appendix where a pair of probabilistic circuits with different constraints is considered.
Together they show that it is highly challenging to derive sufficient structural constraints to guarantee tractability.

The aim of this section is to investigate under what structural constraints on circuits an exact and efficient computation of expected kernels is possible.
But before we characterize tractability in the circuit language, we need to consider \textit{whether also kernels can be represented as circuits}. 
To answer this question
we define kernel circuits (KCs) to be the circuit representations of kernel functions that measure similarities between input pairs defined on the kernel domain.
\begin{mydef}
A KC on domain $\xdomain \times \xdomain$ is a circuit encoding a symmetric kernel function $k: \xdomain \times \xdomain \rightarrow \real^+$.
\end{mydef}
\paragraph{Remark.}
To verify that a given KC is positive definite, 
it is sufficient to verify that the input units are positive definite kernels and that the sum parameters are positive
since the positive definite kernel family is closed under summation and product.
Moreover, it can be done tractably in time linear in the number of input units in the KC.

Figure~\ref{fig: kc example} shows an example kernel circuit.
We further define the left (resp. right) projection of a KC given $\x \in \xdomain$ to be $k(\cdot, \x): \xdomain \rightarrow \real^+$ (resp. $k(\x, \cdot): \xdomain \rightarrow \real^+$).
Intuitively, for the tractability of expected kernels,
the KC should have its structure conform to the distributions that it measures,
which allows the measurement to be broken down into basic ones along the circuit.
Next, we characterize the structural constraints on KCs suitable for such a computation.

\begin{mydef}[Kernel Compatibility]
\label{def: kernel-compatibility}
Let $p$ and $q$ be a pair of compatible circuits.
A kernel circuit $k(\X, \X^\prime)$
is kernel-compatible with the circuit pair $p(\X)$ and $q(\X^\prime)$ if 
\begin{enumerate}[label=\roman*),noitemsep,topsep=0pt]
\item the kernel circuit $k$ is smooth and decomposable, and
\item the left and right projections of $k$ are compatible with circuit $p$ and $q$ respectively for any $\x \in \xdomain$.
\end{enumerate}
\end{mydef}

For example, the KC shown in Figure~\ref{fig: kc example} is kernel-compatible with the circuit pair shown in Figure~\ref{fig: pc example 1} and Figure~\ref{fig: pc example 2}.
Intuitively, a KC with 
kernel compatibility
measures the similarity between the two probability distributions
in a hierarchical way.

\textit{Note that many commonly used kernels have a circuit representations that exhibits kernel compatibility}.
These include several exponentiated forms such as the radial basis function kernel (RBF) and the exponentiated Hamming kernel.
To see how, consider an RBF kernel $k(\X, \X^\prime) = \exp(- \sum_{i = 1}^4 |X_i - X^\prime_i|^2)$. 
It can be represented by a KC with one product unit connected to four input units each of which represents the basic function $\exp(- |X_i - X^\prime_i|^2)$.
Given a pair of compatible PCs $p$ and $q$ as in Figure~\ref{fig: pc example 1} and Figure~\ref{fig: pc example 2}, we can always transform the KC of an RBF kernel into a circuit compatible with $p$ and $q$ by ``splitting'' its product unit into intermediate products that are compatible with the product units in $p$ and $q$ and by introducing dummy sum units receiving single inputs and with parameter $\theta=1$.
The resulting KC is shown in Figure~\ref{fig: kc example}.

Next we show our main result: kernel compatibility is sufficient to guarantee the tractability of expected kernels.
\begin{thm}
\label{thm: double sum complexity}
Let $\p$ and $\q$ be a pair of compatible PCs, and $k$ be a kernel circuit.
If $k$ is kernel-compatible with $\p$ and $\q$,
the expected kernel $\doublesum_{k}(\p, \q)$ can be computed exactly in $\bigO(|\p||\q||k|)$ time.\footnote{As the algorithm will show, this is not a tight bound and in practice the effective number of recursive calls will be much smaller than $|\p||\q||k|$.}
\end{thm}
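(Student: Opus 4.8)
The plan is to exhibit a recursive algorithm that traverses $\p$, $\q$, and $k$ in lockstep, and to analyze it by structural induction. For a unit $n$ of $\p$ with scope $\phi(n)$, a unit $m$ of $\q$ with the same scope, and a unit $l$ of $k$ whose scope is $\phi(n)$ together with a primed copy $\phi(n)^\prime$, define the \emph{partial expected kernel}
\begin{equation*}
\doublesum(n,m,l) \;:=\; \sum_{\x}\sum_{\xp}\, \f_n(\x)\,\f_m(\xp)\,\f_l(\x,\xp),
\end{equation*}
where $\x$ ranges over assignments to $\phi(n)$ and $\xp$ over assignments to $\phi(n)^\prime$. Writing $r_{\p}$, $r_{\q}$, $r_k$ for the three roots, we have $\phi(r_{\p})=\phi(r_{\q})=\X$ and $\phi(r_k)=\X\times\X$, so $\doublesum_{k}(\p,\q)=\doublesum(r_{\p},r_{\q},r_k)$ and it suffices to compute the latter.

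The recursion is a case split maintaining the invariant that $n$, $m$, $l$ have matching scopes in the sense above. First, if any of $n$, $m$, $l$ is a sum unit, say $n=\sum_i\theta_i n_i$, then distributivity gives $\doublesum(n,m,l)=\sum_i\theta_i\,\doublesum(n_i,m,l)$, and smoothness of all three circuits --- kernel-compatibility requires $k$ to be smooth --- ensures each child keeps its parent's scope, so the invariant persists; recurse on each child (symmetrically when the sum unit is $m$ or $l$). Second, if $n$, $m$, $l$ are all product units, then compatibility of $\p$ and $\q$ forces $n$ and $m$ to split $\phi(n)$ into the same two blocks $\X_{\leftn}$ and $\X_{\rightn}$, and kernel-compatibility forces $l$ to decompose consistently as $l_1\cdot l_2$ with $\phi(l_1)=\X_{\leftn}\cup\X_{\leftn}^\prime$ and $\phi(l_2)=\X_{\rightn}\cup\X_{\rightn}^\prime$; the summand then factorizes across the two blocks and $\doublesum(n,m,l)=\doublesum(n_1,m_1,l_1)\cdot\doublesum(n_2,m_2,l_2)$. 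Finally, by the invariant the only remaining case has $n$, $m$, $l$ all input units --- $n$ and $m$ univariate over some variable and $l$ over that variable and its copy --- and $\doublesum(n,m,l)=\sum_{x}\sum_{x^\prime}\f_n(x)\f_m(x^\prime)\f_l(x,x^\prime)$ is evaluated directly in $\bigO(|\xdomain|^2)$.

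Correctness follows by structural induction: the input case is the base, and the two identities above give the inductive step, so evaluating at the roots returns $\doublesum_{k}(\p,\q)$. For the complexity, memoize on the triple $(n,m,l)$ so each distinct triple is computed at most once; there are at most $|\p|\,|\q|\,|k|$ triples, and the non-recursive work is $\bigO(1)$ for a product step, $\bigO(|\xdomain|^2)$ for an input step, and amortized $\bigO(1)$ for a sum step (charge the loop over children to the calls it spawns). Summing and absorbing the constant $|\xdomain|^2$ yields $\bigO(|\p|\,|\q|\,|k|)$, and, as the footnote anticipates, only triples reachable from the roots are ever materialized.

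The step I expect to be the main obstacle is the product case: I must show that kernel-compatibility genuinely forces $l$ to split along exactly the bipartition $\{\X_{\leftn},\X_{\rightn}\}$ used by $\p$ and $\q$, with the two factors of the aligned form $\X_{\leftn}\cup\X_{\leftn}^\prime$ and $\X_{\rightn}\cup\X_{\rightn}^\prime$, rather than along some misaligned partition of the $2|\phi(n)|$ variables of $\phi(l)$. This is what the projection clause of Definition~\ref{def: kernel-compatibility} buys: fixing the second argument of $k$ to any $\x$ produces a circuit over the first-argument variables that is compatible with $\p$, hence its product units refine scopes exactly as $\p$'s do; the right projection symmetrically ties the primed variables to $\q$'s decomposition; and since $\p$ and $\q$ are themselves compatible, the two constraints together pin down the decomposition of $l$. (When $l$'s children instead split purely within a single argument, the double sum degenerates into a product of one-sided sums, still handled recursively by the same machinery.) A secondary point to get right is termination alongside the scope invariant --- product units strictly shrink scopes, sum units preserve them by smoothness, and the alternating sum/product layering ensures the three descents bottom out at input units simultaneously.
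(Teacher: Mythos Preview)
Your proposal is correct and follows essentially the same constructive route as the paper: a recursive descent over matching units of $\p$, $\q$, and $k$, distributing over sums, factorizing over products via (kernel-)compatibility, bottoming out at input units, and bounding cost by memoizing over triples. The only stylistic difference is that the paper, leaning on its standing assumption that sum and product units alternate layerwise, handles all three sum units simultaneously (Proposition~\ref{pro: recursive-sum-nodes}) rather than peeling them off one at a time as you do; both variants are correct and yield the same $\bigO(|\p||\q||k|)$ bound, and your one-at-a-time version has the pleasant side effect of transparently absorbing the corner case the paper calls out separately (one circuit reaching an input unit while another is still a sum). Your discussion of why kernel-compatibility forces the product unit $l$ to decompose as $\X_{\leftn}\cup\X_{\leftn}^\prime$ versus $\X_{\rightn}\cup\X_{\rightn}^\prime$ is in fact more explicit than the paper's, which simply bakes that alignment into the hypothesis of Proposition~\ref{pro: recursive-product-nodes}.
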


The proof is by construction. Intuitively, the computation of expected kernels can be recursively ``broken down'' along the circuit structures, until we reach collections of input units for which we can assume the integrals in the expectations to be tractably computed.
The next proposition shows this recursion over circuits whose outputs are sums.

\begin{algorithm}[!tp]
\caption{$\doublesum_{k_l}(\p_n, \q_m)$ --- Computing the expected kernel
} 
\label{alg: double-sum} 
\textbf{Require:} Two compatible PCs $\p_n$ and $\q_m$, and a KC $k_l$ that is kernel-compatible with the PC pair $\p_n$ and $\q_m$.
\begin{algorithmic}[1]
\If{$n, m, l$ are input units} \\
    \quad \Return $\doublesum_{k_l}(\p_n, \q_m)$
\ElsIf{$n, m, l$ are sum units}\Comment{cf. Prop. \ref{pro: recursive-sum-nodes}} \\
    \quad \Return
    $\sum_{i\in\ch(n), j\in\ch(m), c\in \ch(l)}
    \theta_{i}\delta_{j}\gamma_c
    \,\doublesum_{k_c}(\p_i, \q_j)$
\ElsIf{$n, m, l$ are product units}\Comment{cf. Prop. \ref{pro: recursive-product-nodes}} \\
    \quad \Return 
    $\doublesum_{k_\leftn}(\p_{n_\leftn}, \q_{m_\leftn}) \cdot \doublesum_{k_\rightn}(\p_{n_\rightn}, \q_{m_\rightn})$
\EndIf
\end{algorithmic} 
\end{algorithm}

\begin{pro}
\label{pro: recursive-sum-nodes}
Let $\p_{n}$ and $\q_{m}$ be two smooth probabilistic circuits over variables $\X$ whose output units $n$ and $m$ are sum units,
denoted by
$\p_{n}(\X) = \sum_{i\in\ch(n)}\theta_{i}\p_{i}(\X)$ and 
$\q_{m}(\X) = \sum_{j\in\ch(m)}\delta_{j}\q_{j}(\X)$ respectively. 
Let $k_{l}$ be a kernel circuit with its output unit being a sum unit $l$, denoted by 
$k_l(\X) = \sum_{c \in \ch(l)} \gamma_{c} k_{c}(\X)$.
Then it holds that
\begin{equation}
    \doublesum_{k_l}(\p_n, \q_m) = 
    \sum_{i\in\ch(n)}\theta_{i}\sum_{j\in\ch(m)}\delta_{j} 
    \sum_{c \in \ch(l)} \gamma_{c}
    \,\doublesum_{k_c}(\p_i, \q_j).
\end{equation}
\end{pro}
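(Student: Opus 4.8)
The plan is to prove the identity by a direct expansion of the expected-kernel functional, using the bilinearity of the underlying summation together with the sum decompositions of the three output units. Read the expectation in Definition~\ref{def: expected kernel} as the bilinear form it induces on the (possibly unnormalized) functions encoded by the circuits:
\[
\doublesum_{k_l}(\p_n, \q_m) \;=\; \sum_{\x \in \xdomain} \sum_{\xp \in \xdomain} \p_n(\x)\, \q_m(\xp)\, k_l(\x, \xp).
\]
Here both outer sums range over the common scope $\X$ of the units $n$, $m$, and $l$; this is legitimate because $\p_n$ and $\q_m$ are compatible (hence smooth) and, in the setting of Theorem~\ref{thm: double sum complexity}, $k_l$ is a smooth KC, so every child of each of the three units inherits exactly that scope.

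Next I would substitute the decompositions $\p_n(\x) = \sum_{i \in \ch(n)} \theta_i \p_i(\x)$, $\q_m(\xp) = \sum_{j \in \ch(m)} \delta_j \q_j(\xp)$, and $k_l(\x, \xp) = \sum_{c \in \ch(l)} \gamma_c k_c(\x, \xp)$ into the display above. Since the index sets $\ch(n)$, $\ch(m)$, $\ch(l)$ and the domain $\xdomain$ are all finite, the product of the three sums may be expanded term by term and the order of summation freely interchanged, pulling the scalar weights $\theta_i \delta_j \gamma_c$ outside the sums over $\x$ and $\xp$. This yields
\[
\doublesum_{k_l}(\p_n, \q_m) \;=\; \sum_{i \in \ch(n)} \theta_i \sum_{j \in \ch(m)} \delta_j \sum_{c \in \ch(l)} \gamma_c \left( \sum_{\x \in \xdomain} \sum_{\xp \in \xdomain} \p_i(\x)\, \q_j(\xp)\, k_c(\x, \xp) \right).
\]

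Finally I would identify the parenthesized quantity with $\doublesum_{k_c}(\p_i, \q_j)$, once more via Definition~\ref{def: expected kernel}: by smoothness the children $\p_i$ and $\q_j$ retain scope $\X$, and $k_c$ is still a kernel over $\xdomain \times \xdomain$. The point worth being explicit about --- rather than any computational step --- is that this identification also makes the recursion well-posed: the pair $\p_i, \q_j$ is again compatible and $k_c$ is kernel-compatible with it, so each recursive term is a genuine expected-kernel instance of the same shape, which is what lets Algorithm~\ref{alg: double-sum} recurse. Substituting gives exactly the claimed equation. I do not expect a real obstacle in this proposition; the manipulation is purely algebraic on the unnormalized circuit functions (normalization constants, if wanted, factor through the same recursion). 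The genuine work in Theorem~\ref{thm: double sum complexity} is deferred to the product-unit case of Proposition~\ref{pro: recursive-product-nodes}, where structured decomposability and kernel-compatibility are what make the factorization go through.
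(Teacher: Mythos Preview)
Your proposal is correct and follows essentially the same approach as the paper's own proof: write $\doublesum_{k_l}(\p_n,\q_m)$ as the double sum $\sum_{\x}\sum_{\xp} \p_n(\x)\q_m(\xp)k_l(\x,\xp)$, substitute the three sum-unit decompositions, swap the order of summation, and recognize the inner double sum as $\doublesum_{k_c}(\p_i,\q_j)$. Your additional remarks about smoothness preserving scopes and about well-posedness of the recursion go slightly beyond what the paper spells out, but the core argument is identical.
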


This way, the expected kernel can be computed by the weighted sum of a number of simpler expected kernel computations over the input units.
Analogously, the expected kernel computation can be broken down at the product units as follows thanks to compatibility.

\begin{pro}
\label{pro: recursive-product-nodes}
Let $\p_n$ and $\q_m$ be two compatible probabilistic circuits over variables $\X$ whose output units $n$ and $m$ are product units, denoted by $\p_{n}(\X) = \p_{n_\leftn}(\X_{\leftn})\p_{n_\rightn}(\X_{\rightn})$ and $\q_{m}(\X) = \q_{m_\leftn}(\X_{\leftn})\q_{m_\rightn}(\X_{\rightn})$.
Let $k_l$ be a kernel circuit that is kernel-compatible with the circuit pair $\p_n$ and $\q_m$ with its output unit being a product unit denoted by
$k_l(\X, \X^\prime) = k_{\leftn}(\X_{\leftn}, \X_{\leftn}^\prime) k_{\rightn}(\X_{\rightn}, \X_{\rightn}^\prime)$.
Then it holds that
\begin{equation*}
    \doublesum_{k_l}(\p_n, \q_m) = \doublesum_{k_\leftn}(\p_{n_\leftn}, \q_{m_\leftn}) \cdot \doublesum_{k_\rightn}(\p_{n_\rightn}, \q_{m_\rightn}).
\end{equation*}
\end{pro}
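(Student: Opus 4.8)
The plan is to expand the expected kernel straight from its definition as a double sum over the shared domain, and then use the fact that $\p_n$, $\q_m$, and $k$ all factorize along the same split $\X = \X_{\leftn} \cup \X_{\rightn}$ to break that sum into a product of two independent sums.

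First I would write, reading $\doublesum_{k}(\p_n,\q_m)$ as the bilinear form induced by the (possibly unnormalized) sub-circuits rooted at $n$ and $m$,
\[
\doublesum_{k}(\p_n, \q_m) \;=\; \sum_{\x \in \xdomain}\;\sum_{\xp \in \xdomain} \p_n(\x)\,\q_m(\xp)\,k(\x, \xp).
\]
Since $n$ and $m$ are decomposable product units over the same scope and $\p_n,\q_m$ are compatible, they induce the \emph{same} partition $\X = \X_{\leftn}\cup\X_{\rightn}$, so every assignment splits uniquely as $\x=(\x_{\leftn},\x_{\rightn})$, $\xp=(\xp_{\leftn},\xp_{\rightn})$ with $\p_n(\x)=\p_{n_\leftn}(\x_{\leftn})\p_{n_\rightn}(\x_{\rightn})$ and $\q_m(\xp)=\q_{m_\leftn}(\xp_{\leftn})\q_{m_\rightn}(\xp_{\rightn})$. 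Next I would argue that kernel-compatibility (Definition~\ref{def: kernel-compatibility}) forces the root product of $k$ to split along this very partition: its input units compare a single coordinate with its copy, so $k$, $\p$ and $\q$ conform to a common hierarchical partitioning, and compatibility of the left (resp.\ right) projection of $k$ with $\p$ (resp.\ $\q$) pins the two factors of $k$'s root down to scopes $\X_{\leftn}\cup\X_{\leftn}^\prime$ and $\X_{\rightn}\cup\X_{\rightn}^\prime$, i.e.\ $k(\x,\xp)=k_{\leftn}(\x_{\leftn},\xp_{\leftn})\,k_{\rightn}(\x_{\rightn},\xp_{\rightn})$ as written in the statement, with the two factors over disjoint variable sets.

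Plugging the three factorizations into the double sum, the summand becomes a product of a term in $(\x_{\leftn},\xp_{\leftn})$ only and a term in $(\x_{\rightn},\xp_{\rightn})$ only, and because these four index blocks range over mutually disjoint variable sets the four-fold sum distributes:
\[
\doublesum_{k}(\p_n,\q_m) = \Bigl(\sum_{\x_{\leftn},\xp_{\leftn}}\p_{n_\leftn}(\x_{\leftn})\q_{m_\leftn}(\xp_{\leftn})k_{\leftn}(\x_{\leftn},\xp_{\leftn})\Bigr)\Bigl(\sum_{\x_{\rightn},\xp_{\rightn}}\p_{n_\rightn}(\x_{\rightn})\q_{m_\rightn}(\xp_{\rightn})k_{\rightn}(\x_{\rightn},\xp_{\rightn})\Bigr),
\]
and each factor is, again by the definition of the expected kernel, precisely $\doublesum_{k_\leftn}(\p_{n_\leftn},\q_{m_\leftn})$ and $\doublesum_{k_\rightn}(\p_{n_\rightn},\q_{m_\rightn})$, which yields the claim.

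I expect the main obstacle to be the middle step: carefully extracting from compatibility and kernel-compatibility that the roots of $\p_n$, $\q_m$ and $k$ are \emph{mutually aligned} — in particular that $k_{\leftn}$ cannot mix, say, $\X_{\leftn}$ with $\X_{\rightn}^\prime$ — since this is exactly what guarantees the disjointness of the four index blocks and hence the factorization of the sum. Once that alignment is established (or simply taken as the hypothesis in the form stated in the proposition), the remainder is a routine rearrangement of a finite sum.
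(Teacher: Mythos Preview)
Your proposal is correct and follows essentially the same approach as the paper: expand $\doublesum_{k}(\p_n,\q_m)$ as the double sum $\sum_{\x}\sum_{\xp}\p_n(\x)\q_m(\xp)k(\x,\xp)$, substitute the three factorizations, and split the sum into a product. The paper's proof is in fact terser than yours---it simply takes the aligned factorizations of $\p_n$, $\q_m$, and $k$ as given by the hypothesis (exactly the shortcut you flag in your last paragraph) and does not spell out the compatibility argument you develop in the middle step.
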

Lastly, for the base cases of the recursion we can have that either both $\p$ and $\q$ comprise a single input distribution (sharing the same scope), or one of them is an input distribution and the other a sum unit.\footnote{The other unit cannot be a product unit otherwise 
compatibility
would be violated.}
The first case is easily computable in polytime by the assumption in~\cref{thm: double sum complexity}.
Note that this assumption is generally easy to meet as the double summation in $\doublesum_{k}(\p_n, \q_m)$ for input distributions can be computed in polytime by enumeration, since input distributions have limited scopes (generally univariate) and ${\p}(\x){\q}(\xp)k(\x,\xp)$ can be computed in closed form for decomposable kernels $k$ and commonly used distributions such as discrete distributions as in our case. 
The second corner case reduces to the first when noting that computing $\doublesum_{k}(\p_n, \q_m)$ for an input distribution and a mixture of input distributions reduces to computing a weighted sum of expectations followed by applying \cref{pro: recursive-sum-nodes}. 
Algorithm~\ref{alg: double-sum} summarizes the whole computation of the expected kernel $\doublesum_{k}$, which requires only 
polynomial complexity when caching repeated calls.

As direct results of Theorem~\ref{thm: double sum complexity}, we show that two 
common kernelized discrepancies in reproducing kernel Hilbert space~(RKHS) can be tractably computed if the same structural constraints apply to the distributions and kernels.
\begin{cor}
\label{cor: tractable mmd}
Following the assumptions in Theorem~\ref{thm: double sum complexity},
the squared maximum mean discrepancy $\mmd[\mathcal{H}, p, q]$ in RKHS $\mathcal{H}$ associated with kernel $k$ as defined in \citet{gretton2012kernel} can be tractably computed
in time $\bigO(|\p||\q||k|)$.
\end{cor}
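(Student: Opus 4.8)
The plan is to reduce the squared MMD to three applications of \cref{thm: double sum complexity}. Recalling the identity stated in \cref{sec: expected kernels}, the squared MMD decomposes as $\mmd^2[\mathcal{H}, p, q] = \doublesum_k(p, p) + \doublesum_k(q, q) - 2\,\doublesum_k(p, q)$, so it suffices to show that each of the three expected-kernel terms on the right-hand side can be computed tractably. The term $\doublesum_k(p, q)$ is handled directly by \cref{thm: double sum complexity} under the stated assumptions. The remaining work is to argue that the kernel circuit $k$ is also kernel-compatible with the pairs $(p, p)$ and $(q, q)$, so that the theorem applies to $\doublesum_k(p, p)$ and $\doublesum_k(q, q)$ as well.

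For this, I would unfold \cref{def: kernel-compatibility}. Condition (i), smoothness and decomposability of $k$, is intrinsic to $k$ and does not depend on which circuit pair it is matched with, so it transfers for free. For condition (ii), we are given that the left projection of $k$ is compatible with $p$ and its right projection is compatible with $q$, and that $p$ and $q$ are themselves compatible. Since compatibility amounts to conforming to the same hierarchical scope partition (the same vtree), it is preserved along this chain: the right projection of $k$, conforming to the partition of $q$, also conforms to that of $p$ and is hence compatible with $p$; symmetrically the left projection of $k$ is compatible with $q$. Using in addition that $p$ is compatible with itself (it is structured-decomposable), and likewise $q$, we conclude that $k$ is kernel-compatible with $(p, p)$ and with $(q, q)$.

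Applying \cref{thm: double sum complexity} to each term then yields exact values for $\doublesum_k(p, p)$, $\doublesum_k(q, q)$, and $\doublesum_k(p, q)$ in time $\bigO(|p|^2|k|)$, $\bigO(|q|^2|k|)$, and $\bigO(|p||q||k|)$ respectively; combining them with two additions and a scalar multiplication gives $\mmd^2[\mathcal{H}, p, q]$ in time $\bigO((|p| + |q|)^2 |k|)$, which is polynomial in the sizes of the circuits. I expect the only non-routine step to be the transitivity-of-compatibility argument of the previous paragraph --- one must check that a projection agreeing with the partitioning of $q$ genuinely inherits compatibility with $p$, which crucially relies on $p$ and $q$ sharing a common vtree rather than merely being individually decomposable --- but once that is in place the corollary is immediate.
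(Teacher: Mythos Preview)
Your proposal is correct and follows the same approach as the paper: decompose $\mmd^2$ as a linear combination of three expected kernels and invoke \cref{thm: double sum complexity} on each term. You are in fact more careful than the paper's own proof, which simply states the decomposition $\mmd^2[\mathcal{H}, p, q] = \doublesum_k(p, p) + \doublesum_k(q, q) - 2 \doublesum_k(p, q)$ and declares the result immediate without explicitly verifying that kernel-compatibility transfers to the pairs $(p,p)$ and $(q,q)$.
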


\begin{cor}
\label{cor: tractable kdsd}
Following the assumptions in Theorem~\ref{thm: double sum complexity}, if the probabilistic circuit $p$ further satisfies determinism,
the kernelized discrete Stein discrepancy~(KDSD)
$\mathbb{D}^2(q~\parallel~p) = \expec_{\x, \xp \sim q}[k_p(\x, \xp)]$
in the RKHS associated with kernel $k$ as defined in \citet{yang2018goodness} can be tractably computed.
\end{cor}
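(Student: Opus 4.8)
The plan is to reduce the KDSD to a sum of polynomially many expected kernels and invoke \cref{thm: double sum complexity}. Recall that, for a discrete target $\p$, the Stein kernel $k_{\p}$ of \citet{yang2018goodness} is built by applying a \emph{difference} Stein operator to $k$ in each of its two arguments. Writing $\neg_i$ for the cyclic permutation that acts on the $i$-th coordinate only, and $\scorei(\x) = 1 - \p(\neg_i\x)/\p(\x)$ for the $i$-th component of the discrete score $\score$, the kernel $k_{\p}(\x,\xp)$ is a finite linear combination --- one block of $\bigO(1)$ terms per coordinate $i$ --- of products of the form $a_i(\x)\, b_i(\xp)\, k(\sigma\x,\tau\xp)$, where each of $a_i,b_i$ is either the constant $1$ or the score component $\scorei$, and $\sigma,\tau$ are each either the identity or a power of $\neg_i$. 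Plugging this into $\mathbb{D}^2(\q \parallel \p)=\sum_{\x,\xp}\q(\x)\q(\xp)k_{\p}(\x,\xp)$ and distributing, every resulting summand has the shape $\sum_{\x,\xp}\mu(\x)\,\nu(\xp)\,g(\x,\xp)=\doublesum_g(\mu,\nu)$, where $\mu,\nu$ are ``modified'' copies of $\q$ (built also from $\p$) and $g$ is a ``modified'' copy of $k$; there are only $\bigO(|\X|)$ such summands.

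Next I would show that these modified objects are still small circuits obeying the structural constraints required by \cref{thm: double sum complexity}. Two elementary operations are involved. \emph{(i) Single-coordinate relabeling.} Replacing $\x$ by $\neg_i^{\pm 1}\x$ inside any circuit amounts to relabeling the input units with scope $\{X_i\}$; this changes no scope and no product decomposition, hence preserves smoothness, decomposability and compatibility with the shared hierarchical partition, and does not increase size. This yields $\p(\neg_i\x)$ as a circuit compatible with $\p$, and $k(\sigma\x,\tau\xp)$ as a smooth, decomposable circuit whose left and right projections are still compatible with $\p$ and $\q$. \emph{(ii) Score multiplication.} The factor $\q(\x)\scorei(\x)=\q(\x)-\q(\x)\,\p(\neg_i\x)/\p(\x)$ is a difference of two circuits: a product of two compatible circuits is again a compatible circuit computable in size $\bigO(|\q||\p|)$, and --- this is exactly where determinism of $\p$ is needed --- the quotient of two compatible circuits whose \emph{denominator} is deterministic is again a compatible circuit of polynomial size (the product and quotient operations of \citet{vergari2021compositional}). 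Hence $\q(\x)\scorei(\x)$, and likewise the $\xp$-side factors, are genuine circuits, compatible with $\p$, with $\q$, and with the projections of $k$ (all conforming to the same vtree).

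Combining (i) and (ii), each summand $\doublesum_g(\mu,\nu)$ in the expansion pairs compatible circuits $\mu,\nu$ with a circuit $g$ whose left/right projections are compatible with $\mu,\nu$, so $g$ meets the structural part of \cref{def: kernel-compatibility}; the only caveat is that $g$ is in general neither symmetric nor non-negative, because the difference operators introduce signs and shifted arguments. But the recursion of \cref{alg: double-sum} --- \cref{pro: recursive-sum-nodes,pro: recursive-product-nodes} --- only ever uses smoothness, decomposability and (kernel-)compatibility, never positive-definiteness, so it evaluates $\doublesum_g(\mu,\nu)$ for any such $g$ in time polynomial in $|\mu|,|\nu|,|g|$. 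Summing the $\bigO(|\X|)$ terms, each over circuits polynomial in $|\p|,|\q|,|k|$, gives $\mathbb{D}^2(\q \parallel \p)$ in time polynomial in $|\p|,|\q|,|k|$, which is the claim.

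I expect the main obstacle to be structural bookkeeping rather than any isolated hard step: one must (a) check that the quotient-by-deterministic-$\p$ construction preserves compatibility not only with $\p$ but with the \emph{kernel} circuit's projections; (b) verify that the single-coordinate permutations $\neg_i$ are consistent with the common vtree of $\p$, $\q$ and $k$; and (c) state explicitly the (immediate) extension of \cref{thm: double sum complexity} to signed, asymmetric integrands, together with a standing assumption $\supp(\q)\subseteq\supp(\p)$ so the ratios are well defined. Writing out the coordinatewise expansion of $k_{\p}$ and tallying the final polynomial degree are routine.
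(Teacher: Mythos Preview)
Your proposal is correct and follows essentially the same route as the paper. Both expand $k_{\p}$ coordinatewise into $\bigO(|\X|)$ terms, absorb the score factors into modified copies of $\q$ via the product and quotient operations of \citet{vergari2021compositional} (determinism of $\p$ being exactly what makes the quotient $\tilde p_i/\p$ a compatible circuit), observe that the single-coordinate permutations $\neg_i$ only relabel input units and hence preserve all structural properties, and then invoke \cref{thm: double sum complexity} term by term.

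The one cosmetic difference: where you leave the permutations on the kernel side and obtain integrands $g(\x,\xp)=k(\sigma\x,\tau\xp)$ that are neither symmetric nor positive definite --- correctly arguing that \cref{alg: double-sum} never uses those properties --- the paper instead performs a change of variables $\y=\neg_i\xp$ so that every term has the \emph{original} kernel $k$ and only the distributions change (to $\tilde q_i(\cdot)=\q(\neg_i\cdot)$ and $\q\cdot\tilde p_i/\p$). This sidesteps your caveat (c) entirely: no extension of \cref{thm: double sum complexity} to signed or asymmetric integrands is needed, because $k$ is untouched. Your version is equally valid but buys a little extra bookkeeping; the paper's change-of-variables trick is the cleaner way to finish. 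Your caveats (a) and (b) are exactly the structural checks the paper delegates to Theorems~B.2 and~B.9 of \citet{vergari2021compositional}.
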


The computation of expected kernels by circuit operations allows us to compute the kernel-embedding based statistics exactly and efficiently. This further gives rise to interesting applications part of which will be shown in the next section.
We leave the further explorations on what other statistics will benefit from the proposed computation of expected kernels and what more applications will be inspired as future work.

\section{Expected Kernels in Action}
\label{sec: expected kernels in action}
In this section we will show how the tractable computation of expected kernels can be leveraged in 
1) \emph{kernel embedding for features} to derive an inference algorithm for support vector regression (SVR) under missing data;
2) \emph{kernel embedding for distributions} to
derive a collapsed estimator in black-box importance sampling~(IS).
We further demonstrate the effectiveness of both proposed expected-kernel based algorithms empirically in Section~\ref{sec: empirical evaluation}.

\subsection{SVR for Missing Data}

Support vector machines (SVMs) for classification and regression are widely used in machine learning~\citep{noble2006support}. SVMs' foundations have great theoretical appeal, and they are still widely used in practice.
How to deal with missing features in SVMs has been an active area of research~\citep{aydilek2013hybrid,saar2007handling,marlin2008missing}.

In this section, we aim to tackle missing features in 
SVR at deployment time from a principled probabilistic perspective, like in \citet{anderson2011expected}, but for a larger model class represented as circuits. We propose to leverage PCs to learn the joint feature distribution, and then exploit tractable expected kernels to efficiently compute the expected predictions of SVR models.
More formally, given a set of input variables $\X$ (features) with domain $\xdomain$ and a variable $Y$ (target) with domain $\ydomain$, and a kernel function $k$,
a kernelized SVR learns from a dataset $\{(\x^{(i)}, y^{(i)})\}_{i=1}^n$
to predict for new inputs with a function $f$ taking the form
\begin{equation}
\label{eq: SVR predictor}
    f(\X) = \sum_{i=1}^n w_i k(\x^{(i)}, \X) + b.
\end{equation}

Existing works to handle missing features at deployment time include imputation strategies that substitutes missing values with reasonable alternatives such as the mean or median, estimated from training data.
The imputation methods are typically heuristic and model-agnostic, and sometimes make strong distributional assumptions such as total independence of the feature variables. 
As demonstrated in~\citet{khosravi2019expect}, computing expected predictions is not only theoretically principled but practically effective. 

\begin{mydef}[Expected prediction]
Given a predictive model $f: \xdomain \rightarrow \ydomain$, a distribution $p(\X)$ over features $\X$ and a partial assignment $\xs$ for variables $\Xs \subset \X$, the expected prediction of $f$ w.r.t.\ $p$ is
\begin{equation}
    \expec_{\xc \sim p(\Xc | \xs)} [f(\x)],
\end{equation}
where $\Xc = \X \backslash \Xs$ and where $\x$ is the completed feature vector consisting of both $\xc$ and $\xs$.
\end{mydef}

Intuitively, the expected prediction of a SVR given a partial feature vector can be thought of as
reweighting all possible completions by their probability.
Expected prediction enjoys the  theoretical guarantee that it is consistent under both missing completely at random (MCAR) and missing at random (MAR) 
 mechanisms, if $f$ has been trained on complete data and is Bayes optimal~\citep{josse2019consistency}. 

\begin{pro}
\label{pro: svr tractability}
Given a SVR model $f$ with a KC $k$, and a structured-decomposable PC $\p$ for the feature distribution, the expected prediction of $f$ can be tractably computed
in time $\bigO(|k||\p|)$.
\end{pro}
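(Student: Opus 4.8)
The plan is to turn the expected prediction into a linear combination of expected kernels, each of a form that \cref{thm: double sum complexity} already handles, and then invoke that theorem. First, by linearity of expectation applied to \cref{eq: SVR predictor},
\[
\expec_{\xc \sim p(\Xc \mid \xs)}[f(\xs,\xc)] = \sum_{i=1}^{n} w_i \, \expec_{\xc \sim p(\Xc \mid \xs)}\big[k(\x^{(i)},(\xs,\xc))\big] + b ,
\]
so it suffices to compute each term $m_i := \expec_{\xc \sim p(\Xc \mid \xs)}[k(\x^{(i)},(\xs,\xc))]$ in polynomial time and then add the bias $b$.

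Next I would recast $m_i$ as a normalized expected kernel. Let $\p_{\xs}$ denote the circuit obtained from $\p$ by clamping, for each variable in $\Xs$, its input units to the values prescribed by $\xs$; this keeps the same vtree as $\p$, has size $\bigO(|\p|)$, stays structured-decomposable, and encodes the unnormalized function that equals $p(\x)$ when $\x$ agrees with $\xs$ and $0$ otherwise. Let $q_i$ be the fully factorized circuit encoding the point mass $\mathbb{1}[\X = \x^{(i)}]$, compiled onto the same vtree as $\p$, so that $\p_{\xs}$ and $q_i$ are compatible and $|q_i|$ is linear in the number of features. Using the symmetry of $k$, one checks that $\doublesum_k(\p_{\xs}, q_i) = \sum_{\xc} p(\xs,\xc)\,k(\x^{(i)},(\xs,\xc))$, hence $m_i = \doublesum_k(\p_{\xs}, q_i)/p(\xs)$, where the marginal $p(\xs)$ is computable in $\bigO(|\p|)$ time by smoothness and decomposability of $\p$.

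It then remains to check that $k$ is kernel-compatible with the pair $(\p_{\xs}, q_i)$ so that \cref{thm: double sum complexity} applies. Condition (i) of \cref{def: kernel-compatibility} (smoothness and decomposability of $k$) holds by assumption on the KC. For condition (ii): the right projection $k(\x,\cdot)$ is trivially compatible with the fully factorized $q_i$ once $q_i$ is built on the matching vtree, while the left projection $k(\cdot,\x)$ being compatible with $\p_{\xs}$ --- equivalently with $\p$ --- is the standing structural assumption that the KC conforms to the vtree of $\p$, which is exactly what holds for the RBF and exponentiated-Hamming KCs highlighted after \cref{def: kernel-compatibility}. With this, \cref{thm: double sum complexity} computes each $m_i$ in time $\bigO(|\p|\,|k|)$ up to a factor linear in the number of features, and the whole expected prediction costs $\bigO(n\,|\p|\,|k|)$ up to that factor, i.e., polynomial in the sizes of $\p$ and $k$ and in the number of support vectors.

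The main obstacle is the circuit bookkeeping in the middle step: arguing cleanly that clamping preserves structured-decomposability and compatibility with $k$, and that a point mass can always be compiled onto a prescribed vtree without a size blow-up. A slicker alternative that sidesteps the point-mass gadget is to note that $k(\x^{(i)},\cdot)$ is itself a circuit --- the left projection of the KC --- compatible with $\p$, so the product $\p_{\xs}\cdot k(\x^{(i)},\cdot)$ is a smooth, decomposable circuit whose total integral, obtained in one feedforward pass, equals $p(\xs)\,m_i$; this mirrors the tractable computation of moments of predictive models of \citet{khosravi2019tractable}.
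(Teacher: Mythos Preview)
Your proposal is correct and follows essentially the same approach as the paper: reduce by linearity to single expected kernels $\expec_{\xc}[k(\x^{(i)},(\xs,\xc))]$, then observe that these are a degenerate case of the doubly expected kernel in \cref{def: expected kernel} (with one argument fixed to a constant), so \cref{thm: double sum complexity} applies. Your version is considerably more explicit about the mechanics of this reduction --- constructing the point-mass circuit $q_i$, clamping $\p$, and checking kernel compatibility --- whereas the paper simply asserts in one line that the single-expectation case is subsumed by the double one and that \cref{thm: double sum complexity} and Algorithm~\ref{alg: double-sum} apply directly.
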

\begin{proof}
The expected prediction of $f$ w.r.t. $p$ can be rewritten as a linear combination of expected kernels.
\begin{equation*}
    \expec_{\xc \sim p(\Xc | \xs)} [f(\x)] = \sum_{i=1}^n w_i \expec_{\xc \sim p(\Xc | \xs)} [k(\x,\x^{(i)})] + b.
\end{equation*}
Note that the task of computing the doubly expected kernel in Definition~\ref{def: expected kernel} subsumes the task of computing a singly expected kernel where one of the inputs to the kernel function is a constant vector $\x_i$ instead of a variable
and both Theorem~\ref{thm: double sum complexity} and Algorithm~\ref{alg: double-sum} apply here.
\end{proof}

\subsection{Collapsed Black-box Importance Sampling}
Black-box importance sampling (BBIS)~\citep{liu2016black} is a recently introduced algorithm to flexibly perform approximate probabilistic inference on intractable distributions.
By weighting samples from an arbitrary proposal as to minimize a kernelized Stein discrepancy (KSD), BBIS can accurately estimate continuous target distributions. 

In this section, we first show that the BBIS algorithm can be extended to discrete distributions by adopting a recently proposed kernelized discrete Stein discrepancy (KDSD)~\citep{yang2018goodness} that serves as the discrete counterpart for KSD.
We further show that the BBIS algorithm can be improved by using collapsed samples, which is made possible by the tractable computation of expected kernels.

We start with a brief overview of how to construct the KDSD.
For a finite domain $\xdomain$, a \emph{cyclic permutation} denoted by $\neg$ is a bijection associated with some ordering of elements in $\xdomain$ that maps an element in $\xdomain$ to the next one according to the ordering. A \emph{partial difference operator} $\Delta^*$ for any function $f$ on domain $\xdomain$ is defined as 
$\Delta^* f(\x) := (\Delta^*_1 f(\x), \cdots, \Delta^*_D f(\x))$, with
$\Delta^*_i f(\x) := f(\x) - f(\neg_i \x)$ for $i = 1, 2, \cdots, D$ with $D = |\X|$.
Now we are ready to define the (difference) score function, an important tool for determining a probability distribution.
The score function is defined as 
$\score(\x) := \Delta^* p(\x)/p(\x)$,
a vector-valued function with its $i$-th dimension being
$\scorei(\x) := \Delta^*_i p(\x)/p(\x)$.
Then the KDSD between two distributions $p$ and $q$ is defined as
\begin{equation}
    \mathbb{D}(q \parallel p) := \sup_{\vv{f} \in \mathcal{F}} \expec_{\x\sim q(\X)}[\steinop \vv{f}(\x)],
    \label{eq: sd}
\end{equation}
with the functional space $\F$ being RKHS associated with a strictly positive definite kernel $k$, and the operator $\steinop$ being the \emph{Stein difference operator} defined as
$\steinop \vv{f} := \score(\x) \vv{f}^\top - \Delta \vv{f}(\x)$.
The KDSD is a proper divergence measure in the sense that for any strictly positive distribution $p$ and $q$, the KDSD $\mathbb{D}(q \parallel p) = 0$ if and only if $p = q$ \citep{yang2018goodness}.
Moreover, a nice property of the KDSD is that even though it involves a variational optimization problem in its definition, it admits a closed-form representation as
\begin{equation}
    \ksd(q \parallel p) 
    := \mathbb{D}^2(q \parallel p)
    = \expec_{\x, \xp \sim q}[k_p(\x, \xp)],
    \label{eq: ksd}
\end{equation}
with the kernel function $k_p$ defined as
\begin{equation*}
\label{eq: kernel-p}
\begin{split}
    k_p(\x, \xp) 
    &= \score(\x)^\top k(\x, \xp) \score(\xp) - \score(\x)^\top \Delta^{\xp} k(\x, \xp) \\
    &\quad - \Delta^{\x} k(\x, \xp)^\top \score(\xp) + \trace(\Delta^{\x,\xp} k(\x, \xp)),
\end{split}
\end{equation*}
where the superscript $\x$ and $\xp$ of the difference operator specifies the variables that it operates on.

We can now proceed to propose a BBIS algorithm for categorical distributions.
Given a set of samples $\{\x^{(i)}\}_{i = 1}^n$ generated from some unknown proposal $q$ possibly from some black-box mechanism, 
Categorical BBIS computes the importance weights for the samples by minimizing 
the KDSD between $q$ and target distribution $p$ formulated as
\begin{equation}
\label{eq: continuous-optimal-weights}
    \vv{w}^* = \argmin_{\vv{w}} \left\{
    \vv{w}^\top \vv{K_p} \vv{w} \,\middle\vert\,
    \sum_{i = 1}^n w_i = 1, ~w_i \geq 0 \right\},
\end{equation}
where $\vv{K}_p$ is a Gram matrix with entries $[\vv{K}_p]_{ij} = k_p(\x^{(i)}, \x^{(j)})$ 
and $\vv{w} = (w_1, \cdots, w_n)$ is the weight vector. 
We prove that the BBIS for categorical distributions enjoys the same convergence guarantees as its continuous counterpart. Due to space constraints, we defer both the algorithm details and convergence proofs to the Appendix.

However, a computational bottleneck in BBIS limits its scalability, the construction of the Gram matrix.
We therefore propose a collapsed variant of BBIS to accelerate it by delivering equally good approximations with fewer samples.
Collapsed samplers, also known as \textit{cutset} or \textit{Rao-Blackwellised} samplers~\citep{casella1996rao}, improve over classical particle-based methods by limiting sampling to a subset of the variables while pairing it with some closed-form representation of a conditional distribution over the rest.

Specifically,
let $(\Xs, \Xc)$ be a partition for variables $\X$.
A \textit{weighted collapsed sample} for variables $\X$ takes the form of a triplet $(\xs, p(\Xc \mid \xs), w)$
where $\x_\s$ is an assignment for the sampled variables $\Xs$, $p(\Xc \mid \xs)$ is a conditional distribution over the \textit{collapsed set} $\Xc$, and $w$ the importance weight.
We now show how to distill a \textit{conditional} KDSD, in order to extend BBIS to the collapsed sample scenario. 
\begin{mydef}[Conditional KDSD]
\label{def: c-KDSD}
Assume given a strictly positive 
distribution $p$ and a strictly positive proposal distribution of the sampled set $q_\s$, where the variable subset $\X_\s$ defines the samples. The full distribution defined by the collapsed samples is $q(\x) = q_\s(\x_\s) p(\x_\co \mid \x_\s)$.
The conditional KDSD~(\ckdsd) 
is defined as the KDSD between distributions $p$ and $q$,
i.e., $\ksd_{\s}(q_\s \parallel p) := \ksd(q \parallel p)$.
\end{mydef}

\begin{pro}
\label{pro: kdsd}
The \ckdsd 
between the two positive distributions $p$ and $q$ admits a closed form as
\begin{equation}
    \begin{split}
        \ksd_{\s}(q_\s \parallel p)
        = \expec_{\x_\s, \xp_\s \sim q_\s(\X_{\s})}[k_{p, \s}(\x_\s, \xp_\s)],
    \end{split}
\end{equation}
where $k_{p, \s}$ denotes a conditional kernel function defined as
\begin{equation}
\label{eq: expected kernel kps}
    \begin{split}
        k_{p, \s}(\x_\s, \xp_\s) = 
        \mathop{\expec}_{
            \substack{\x_{\co} \sim p(\X_{\co} \mid \x_\s), \xp_{\co} \sim p(\X_{\co} \mid \xp_\s)}
        }
        \left[k_p(\x, \xp)\right].
    \end{split}
\end{equation}
\end{pro}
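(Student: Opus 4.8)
The plan is to reduce the conditional KDSD back to the ordinary (unconditional) KDSD and then unfold its closed form along the partition $(\X_\s, \X_{\co})$. By~\cref{def: c-KDSD}, $\ksd_{\s}(q_\s \parallel p)$ is \emph{defined} to be $\ksd(q \parallel p)$ for the full distribution $q(\x) = q_\s(\x_\s)\, p(\x_{\co} \mid \x_\s)$, so the first thing I would check is that this $q$ is a legitimate strictly positive distribution over $\xdomain$: positivity is inherited from that of $q_\s$ and $p$, and normalization is immediate since $\sum_{\x} q(\x) = \sum_{\x_\s} q_\s(\x_\s) \sum_{\x_{\co}} p(\x_{\co} \mid \x_\s) = 1$. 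Because $p$ is strictly positive, the score function $\score$ and hence $k_p$ are well defined on all of $\xdomain \times \xdomain$, so the variational quantity in~\cref{eq: sd} collapses to the closed form of~\cref{eq: ksd}, i.e.\ $\ksd(q \parallel p) = \expec_{\x, \xp \sim q}[k_p(\x, \xp)]$, by~\citet{yang2018goodness}.

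The second step is to expand this double expectation. Writing $\x = (\x_\s, \x_{\co})$ and $\xp = (\xp_\s, \xp_{\co})$ and using that the two draws $\x, \xp \sim q$ are independent, the joint mass factorizes as $q(\x)\,q(\xp) = q_\s(\x_\s)\, p(\x_{\co} \mid \x_\s)\, q_\s(\xp_\s)\, p(\xp_{\co} \mid \xp_\s)$. Since the variables are discrete, the expectation is a finite sum and may be freely regrouped: I would pull the sum over the sampled coordinates $\x_\s, \xp_\s$ (weighted by $q_\s(\x_\s)q_\s(\xp_\s)$) to the outside and keep the sum over the collapsed coordinates $\x_{\co}, \xp_{\co}$ (weighted by $p(\x_{\co} \mid \x_\s)\, p(\xp_{\co} \mid \xp_\s)$) inside. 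By the law of total expectation this yields
\begin{equation*}
    \ksd_{\s}(q_\s \parallel p) = \expec_{\x_\s, \xp_\s \sim q_\s(\X_{\s})}\Bigl[ \mathop{\expec}_{\substack{\x_{\co} \sim p(\X_{\co} \mid \x_\s), \xp_{\co} \sim p(\X_{\co} \mid \xp_\s)}} \bigl[ k_p(\x, \xp) \bigr] \Bigr],
\end{equation*}
and the inner expectation is precisely $k_{p, \s}(\x_\s, \xp_\s)$ from~\cref{eq: expected kernel kps}, which establishes the proposition.

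I do not expect a serious obstacle: at bottom the argument is just Fubini for finite sums combined with the definitional identity $\ksd_{\s} := \ksd$. The one thing to watch is that $k_p$ is assembled entirely from the \emph{target} $p$ --- its score $\score$ and the partial-difference operators $\Delta^{\x}, \Delta^{\xp}, \Delta^{\x,\xp}$ act coordinatewise on the full vector $\X$ --- so the collapsed structure of $q$ never enters $k_p$ itself; it only reshapes the product measure against which $k_p$ is integrated, hence no cross terms between $\X_\s$ and $\X_{\co}$ are created or lost when the two expectations are nested. It is also worth stating explicitly that it is exactly the closed form~\cref{eq: ksd} that turns $k_{p,\s}$ into a well-defined symmetric kernel whose double expectation fits the template of~\cref{def: expected kernel}, so that the tractable routine behind~\cref{thm: double sum complexity} can subsequently be applied to it once the conditionals $p(\X_{\co} \mid \x_\s)$ are themselves represented as circuits.
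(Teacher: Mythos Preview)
Your proof is correct and is exactly the intended argument: the paper does not spell out a separate proof for this proposition because it follows immediately from \cref{def: c-KDSD} together with the closed form~\cref{eq: ksd} and the factorization $q(\x)=q_\s(\x_\s)\,p(\x_{\co}\mid\x_\s)$, which is precisely the route you take. Your extra remarks on positivity and on $k_p$ depending only on the target $p$ are helpful sanity checks but not strictly needed.
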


Similar to the optimization in Equation~\ref{eq: continuous-optimal-weights} for BBIS, 
given a set of collapsed samples $\{(\xs^{(i)}, p(\Xc \mid \xs^{(i)}))\}_{i = 1}^n$, the problem of computing importance weights can be cast as minimizing the empirical CKDSD between the collapsed samples and the target distribution $p$
as follows.
\begin{equation}
    \begin{split}
        \ksd_{\s}(\{\xs^{(i)}, w_i \} \parallel p)
        = \vv{w}^\top \vv{K}_{p, \s} \vv{w}
    \end{split}
\end{equation}
where $\vv{w}$ is the vector of sample weights and $\vv{K}_{p, \s}$ is the Gram matrix with entries $[\vv{K}_{p, \s}]_{ij} = k_{p, \s}(\xs^{(i)}, \xs^{(j)})$.
Now the key question is whether the conditional kernel function $k_{p, \s}$ can be computed tractably. We show that this is possible with the tractable computation of expected kernels.
\begin{pro}
\label{pro: tractable conditonal kernel function}
Let $p(\Xc \mid \xs)$ be a PC that encodes a conditional distribution over variables $\Xc$ conditioned on $\Xs = \xs$, and $k$ be a KC.
If the PC $p(\Xc \mid \xs)$ and $p(\Xc \mid \xs^\prime)$ are compatible and $k$ is kernel-compatible with the PC pair for any $\xs$, $\xs^\prime$, 
then the conditional kernel function $k_{p, \s}$ can be tractably computed.
\end{pro}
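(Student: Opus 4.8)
The plan is to expand the conditional Stein kernel $k_{p,\s}$ of \cref{eq: expected kernel kps} into a \emph{finite linear combination of ordinary expected kernels}, each covered by Theorem~\ref{thm: double sum complexity}, and then conclude by linearity of expectation. Recall from Proposition~\ref{pro: kdsd} that $k_p = \sum_{i=1}^{D} k_{p,i}$, and that each $k_{p,i}(\x,\xp)$ --- once the finite-difference operators are written out via $\Delta^{\x}_i k(\x,\xp) = k(\x,\xp) - k(\neg_i\x,\xp)$, $\Delta^{\xp}_i k(\x,\xp) = k(\x,\xp) - k(\x,\neg_i\xp)$, and so on --- is a $\pm$-combination of a constant number of terms of the shape $s^{(1)}_i(\x)\, s^{(2)}_i(\xp)\, k(\sigma\x,\tau\xp)$, where each factor $s^{(\cdot)}_i$ is either the constant $1$ or the ratio $p(\neg_i\,\cdot)/p(\cdot)$ coming from $\scorei(\cdot) = 1 - p(\neg_i\,\cdot)/p(\cdot)$, and each of $\sigma,\tau$ is either the identity or the single-coordinate cyclic permutation $\neg_i$. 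Summing over $i$ writes $k_p$ as an $\bigO(D)$-term signed combination of such elementary terms. Although $k_p$ and these terms may be negative, this causes no difficulty: Algorithm~\ref{alg: double-sum} and Propositions~\ref{pro: recursive-sum-nodes} and~\ref{pro: recursive-product-nodes} use only the \emph{structural} properties of the circuits (smoothness, decomposability, kernel-compatibility), never positivity.

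Next I would substitute $p(\x) = p(\x_\s)\, p(\x_\co \mid \x_\s)$ in every ratio and take the expectation in \cref{eq: expected kernel kps}, i.e.\ $\expec_{\x_\co \sim p(\Xc\mid\xs),\; \xp_\co \sim p(\Xc\mid\xp_\s)}[\,\cdot\,]$, term by term. For a factor $p(\neg_i\x)/p(\x)$ there are two cases. If $i\in\co$, the $p(\x_\s)$ factors cancel and the ratio equals $p(\neg_i\x_\co\mid\x_\s)/p(\x_\co\mid\x_\s)$; absorbing it against the sampling measure and using that $\neg_i$ is a bijection (substitute $\y_\co = \neg_i\x_\co$) turns $\expec_{\x_\co\sim p(\Xc\mid\xs)}\!\big[\tfrac{p(\neg_i\x_\co\mid\x_\s)}{p(\x_\co\mid\x_\s)}\,h(\x_\co)\big]$ into $\expec_{\y_\co\sim p(\Xc\mid\xs)}\!\big[h(\neg^{-1}_i\y_\co)\big]$ --- so the distribution is unchanged and the only effect is to compose one more coordinate of the left or right argument of $k$ with $\neg^{-1}_i$. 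If instead $i\in\s$, then $p(\x_\co\mid\x_\s)/p(\x) = 1/p(\x_\s)$ and the remaining $\x_\co$-dependence is carried by $p(\x_\co\mid\neg_i\x_\s)$, so the expectation contributes the scalar $p(\neg_i\x_\s)/p(\x_\s)$ and replaces the sampling distribution by $p(\Xc\mid\neg_i\x_\s)$ (symmetrically on the primed side). Note that, unlike the unconditional KDSD of Corollary~\ref{cor: tractable kdsd}, here the expectation is taken against conditionals of $p$, so these ratios cancel \emph{exactly} and no determinism on $p$ is required. Doing this for all terms, and using partial evaluation of the KC at the fixed sampled coordinates $\x_\s,\xp_\s$ (which preserves smoothness and decomposability, exactly as the left/right projections of a KC do), $k_{p,\s}(\x_\s,\xp_\s)$ becomes an $\bigO(D)$-term combination $\sum_t c_t\,\doublesum_{k^{(t)}}\!\big(p^{(a_t)}(\X_\co),\,p^{(b_t)}(\X_\co)\big)$, where each $c_t$ is a product of a constant number of marginal ratios $p(\neg_i\x_\s)/p(\x_\s)$ of the joint PC $p$ (each computable in $\bigO(|p|)$), each $p^{(a_t)}, p^{(b_t)}$ lies in $\{\,p(\Xc\mid\xs),\, p(\Xc\mid\neg_i\xs),\, p(\Xc\mid\xp_\s),\, p(\Xc\mid\neg_i\xp_\s)\,\}$, and each $k^{(t)}$ is the KC $k$ with at most one input unit per argument composed with a cyclic permutation of its variable's domain (and evaluated at the $\Xs$-coordinates $\x_\s$ and $\xp_\s$, possibly after a $\neg_i$).

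It then remains to check that every summand meets the hypotheses of Theorem~\ref{thm: double sum complexity}. Each $p^{(a_t)}$ and $p^{(b_t)}$ is a conditional PC from the pairwise-compatible family guaranteed by assumption (compatibility holds for any pair of conditioning values, including the permuted ones). Each $k^{(t)}$ is still smooth and decomposable, and its left and right projections are still compatible with the conditional PCs, because composing a single input unit with a cyclic permutation of its variable's domain --- and fixing some input units to constants --- is a local operation that alters neither the scopes nor the hierarchical decomposition of the circuit; hence $k^{(t)}$ remains kernel-compatible with the pair $(p^{(a_t)},p^{(b_t)})$. By linearity of expectation, $k_{p,\s}(\x_\s,\xp_\s) = \sum_t c_t\,\doublesum_{k^{(t)}}(p^{(a_t)},p^{(b_t)})$, each $\doublesum_{k^{(t)}}$ being computable in $\bigO(|p|^2|k|)$ by Algorithm~\ref{alg: double-sum}; together with the $\bigO(D)$ marginal queries on $p$, the whole conditional kernel is obtained in time polynomial in $D$, $|p|$, and $|k|$.

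The main obstacle is the middle step: the case analysis that rewrites every score and finite-difference term as a genuine expected kernel over \emph{re-conditioned} circuits, and --- tied to it --- the verification that composing input units with cyclic permutations and partially evaluating the KC at the sampled coordinates preserves smoothness, decomposability, and (kernel-)compatibility. Once those structural facts are secured, the conclusion is immediate from linearity of expectation and a direct appeal to Theorem~\ref{thm: double sum complexity}.
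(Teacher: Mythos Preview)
Your proposal is correct and follows the same reduction-to-expected-kernels strategy as the paper, but the paper takes a shorter path that you miss. After writing $k_p=\sum_{i=1}^{D}k_{p,i}$ and expanding each $k_{p,i}$ into the four ratio-times-kernel terms exactly as you do, the paper observes that for every $i\in\co$ the expectation
\[
\expec_{\x_\co\sim p(\Xc\mid\x_\s),\;\xp_\co\sim p(\Xc\mid\xp_\s)}\big[k_{p,i}(\x,\xp)\big]
\]
is \emph{identically zero}. This is essentially the Stein identity: on the collapsed block the sampling measure coincides with $p$, so the Stein operator has mean zero there. Your own change of variables for $i\in\co$ already exhibits the cancellation --- after the substitution, all four terms become the \emph{same} expected kernel $\doublesum_{k(\invneg_i\cdot,\invneg_i\cdot)}\!\big(p(\Xc\mid\x_\s),p(\Xc\mid\xp_\s)\big)$ with signs $+,-,-,+$. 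The paper therefore sums only over $i\in\s$, obtaining $|\s|$ (not $D$) groups of four expected kernels; and for $i\in\s$ the permutation $\neg_i$ touches only the fixed $\Xs$-coordinates, so no modification of the KC's input units over $\Xc$ is ever needed.

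Your route still works: computing all $D$ groups and arguing that composing input units with a cyclic permutation preserves smoothness, decomposability and kernel-compatibility is a valid, if heavier, argument, and your structural justification for that preservation is sound. What the paper's simplification buys is (i) a summation over $|\s|$ rather than $D$ indices, (ii) no need to reason about permuted $\Xc$-input units in the KC, and (iii) a conceptual explanation of why collapsing helps --- the collapsed coordinates contribute nothing to $k_{p,\s}$.
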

This finishes the construction of a BBIS scheme using the collapsed samples, 
which we name CBBIS. The complete algorithmic recipe for CBBIS is presented in Algorithm~\ref{alg: cbbis-pc}

\begin{algorithm}[!tp]
\caption{\textsc{CBBIS}($p, q_\s, k, n$)
} 
\label{alg: cbbis-pc} 
\textbf{Input:} target distribution $p$ over variables $\X$, black-box mechanism $q_\s$, kernel function $k$, number of samples $n$\\
\textbf{Output:} a set of weighted collapsed samples
\begin{algorithmic}[1] 
\State Sample $\{\xs^\idx{i}\}^{n}_{i=1}$ from $q_\s$
\For{ $i = 1, \ldots, n$}
\State Compile $p(\Xc \mid \xs^\idx{i})$ into a PC
\Comment cf.~Sec. \ref{sec: inference performance}
\EndFor
\For{ $i = 1,\ldots,n$}
\For{ $j = 1,\ldots,n$}
\State $[\vv{K}_{p}]_{ij} = k_{p, \s}(\xs^\idx{i}, \xs^\idx{j})$ 
\Comment cf.~Prop. \ref{pro: tractable conditonal kernel function} 
\EndFor
\EndFor
\State $\vv{w}^* = \argmin_{\vv{w}}\left\{ \vv{w}^\top \vv{K}_p \vv{w} \,\middle\vert\, \sum_{i = 1}^n w_i = 1, ~w_i \ge 0 \right\}$
\State\Return $\{(\xs^\idx{n}, p(\Xc \mid \xs^\idx{n}), w^*_i)\}_{i=1}^{n}$
\end{algorithmic} 
\end{algorithm}

\begin{figure*}[!t]
    \centering
        {\includegraphics[width=0.245 \textwidth]{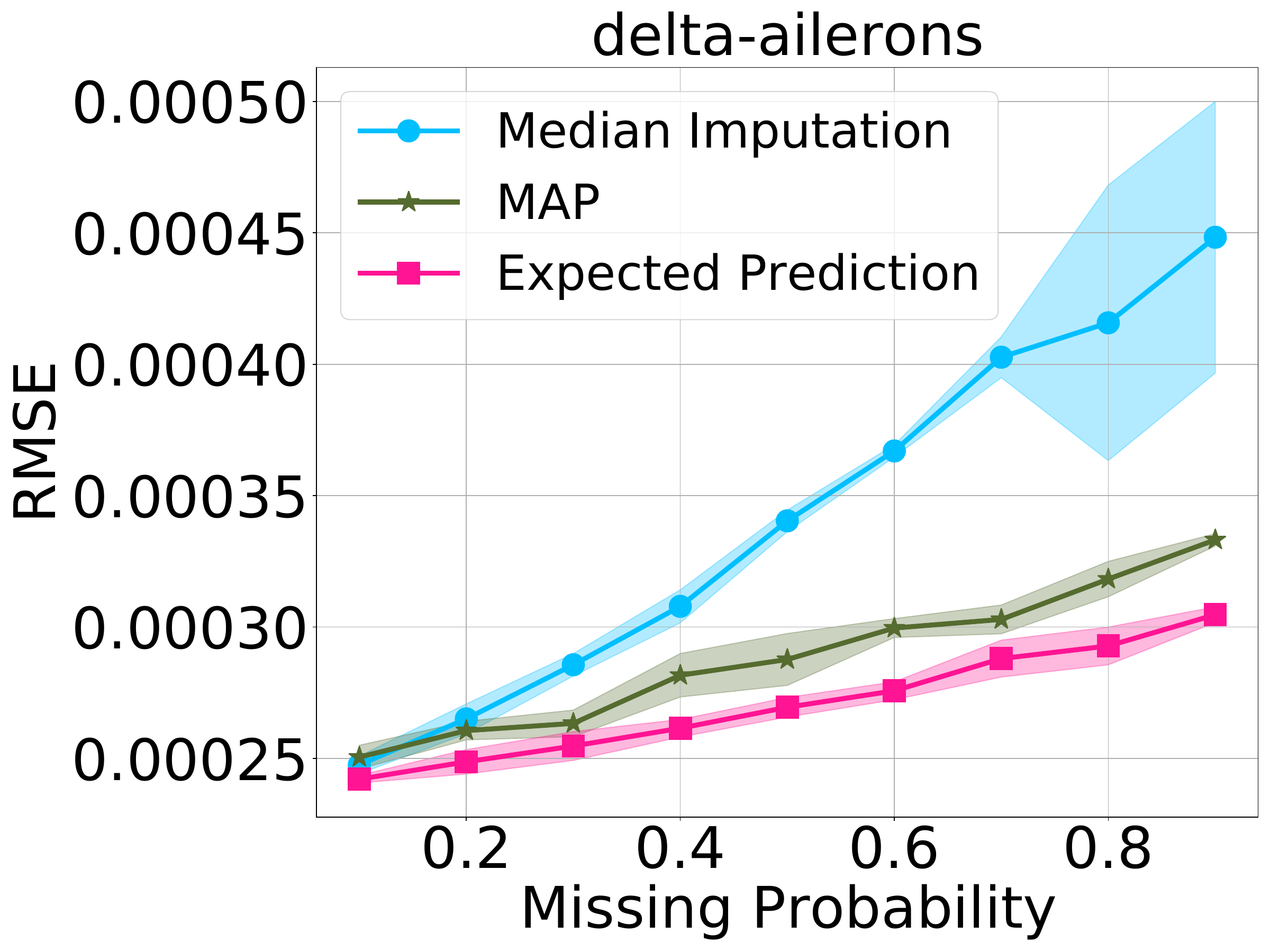}}
        {\includegraphics[width=0.245 \textwidth]{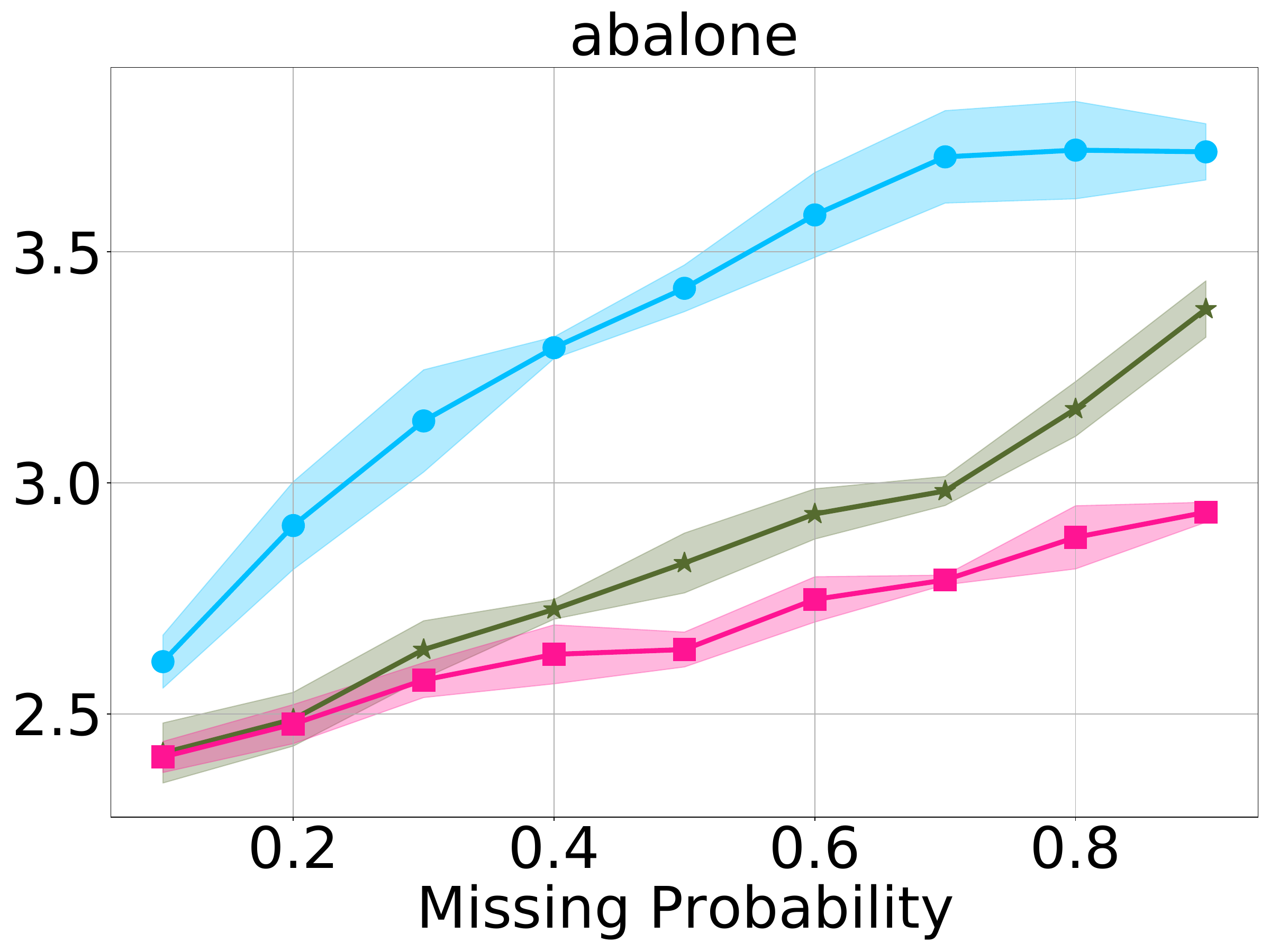}}
        {\includegraphics[width=0.245 \textwidth]{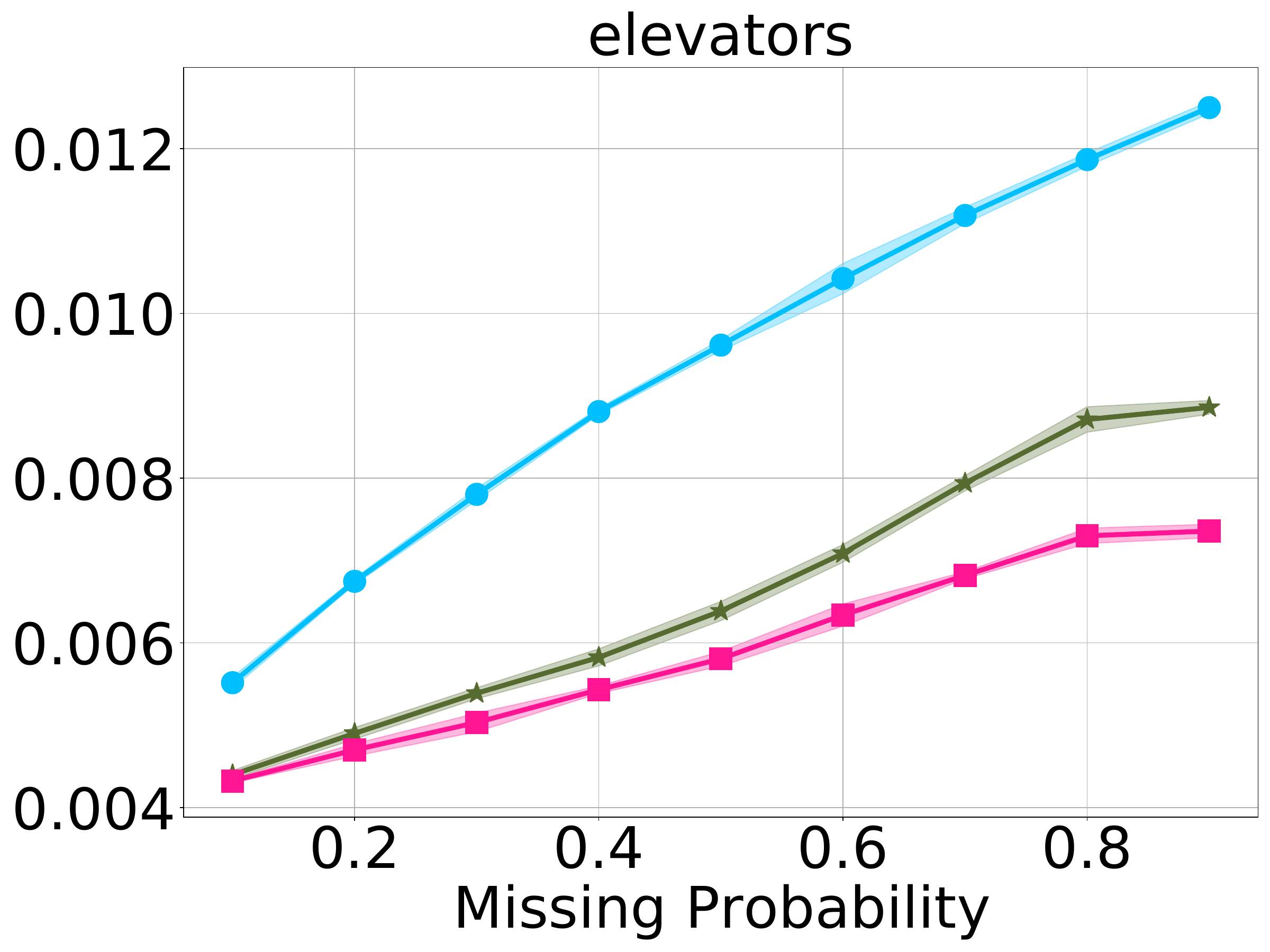}}
        {\includegraphics[width=0.245 \textwidth]{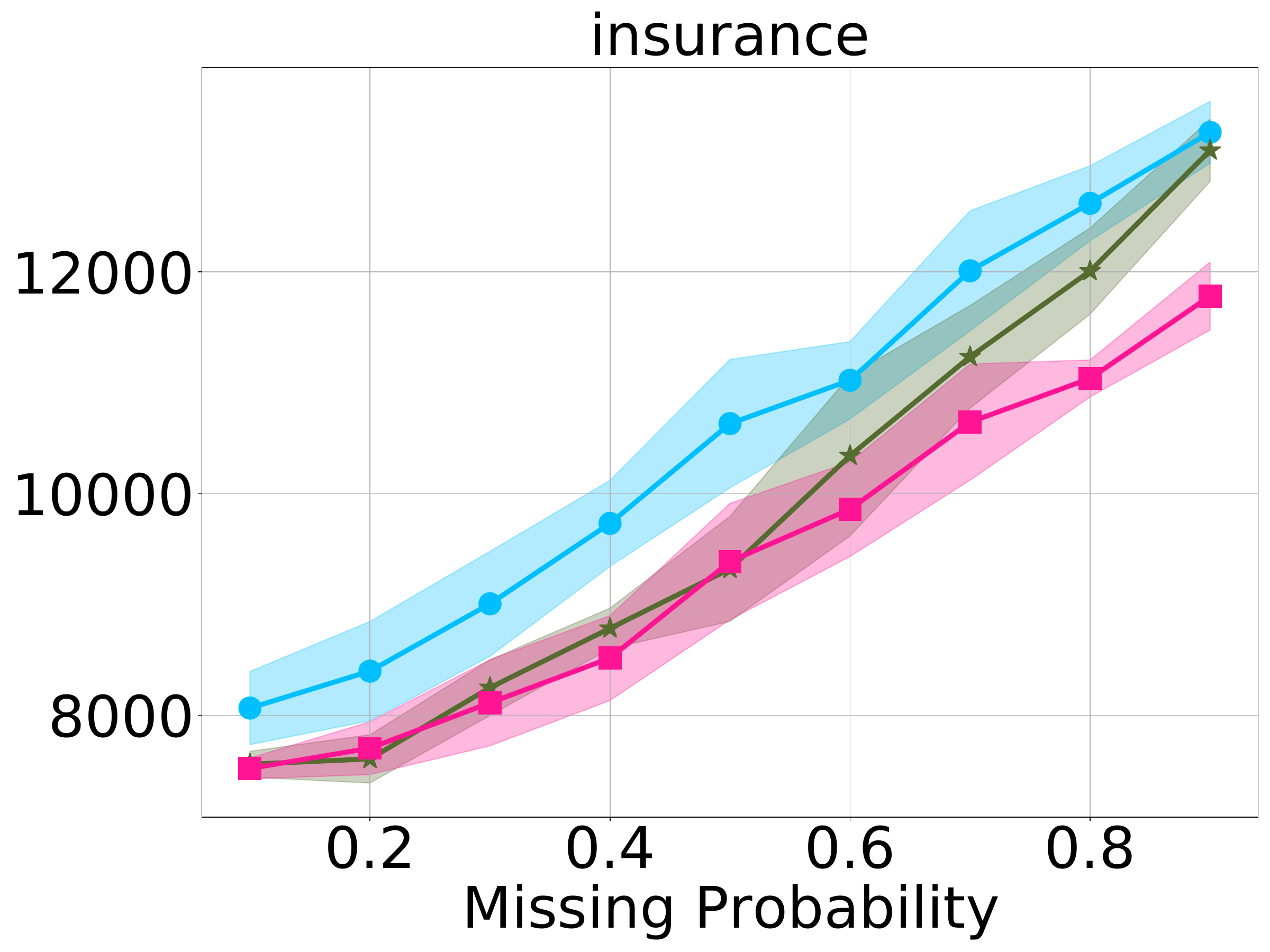}}
    \caption{Evaluating RMSE (y-axis) of the predictions of SVR under different percentages of missing features (x-axis) over four real-world regression datasets. Overall, our expected predictions outperform median imputation and MAP.
    }
    \label{fig: svr-exp}
\end{figure*}

\section{Related Work}

\label{sec: related}
The idea of composing kernels with sums and products first emerged in the literature of the automatic statistician, and is applied to structure discovery for Gaussian processes and nonparametric regression tasks~\citep{duvenaud2013structure}. Compositional kernel machines~\citep{gens2017compositional} further leverage sum-product functions~\citep{friesen2016sum} for a tractable instance-based method for object recognition. 
Instead, we provide the general theoretical foundations for the tractable computation of expected kernels.

Our proposed BBIS scheme extends the original black-box importance sampling to discrete domains, which have not been explored yet, contrary to the continuous case~\citep{cockayne2019bayesian,oates2014control}.
Alternatives to black-box optimization include directly approximating the proposal distribution to compute the importance weights~\citep{delyon2016integral}.
The KSD~\citep{liu2016stein,liu2016kernelized} and its variants~\citep{yang2018goodness,wang2019stein,wang2018stein,singhal2019kernelized}, when applied to particle-based inference, consider the particles to be fully instantiated while our proposed conditional KDSD generalizes it to collapsed~particles.

Closely related, works in probabilistic graphical models represent collapsed particles by circuits.
The approximate compilation proposed by \citet{friedman2018approximate} employs online collapsed importance sampling (CIS) partially compiling the target distribution into a sentential decision diagram~(SDD)~\citep{darwiche2011sdd}. %
\citet{rahman2019cutset} propose to use a cutset network, a smooth, decomposable and deterministic PC to distill a collapsed Gibbs sampling (CGS) scheme for Bayesian networks.  
Arithmetic circuits~\citep{darwiche2003differential}, other kinds of PCs that can be compiled from Bayesian networks have been used in the context of variational approximations~\citep{lowd2010approximate,vlasselaer2015anytime,shih2020probabilistic}.

\section{Empirical Evaluation}
\label{sec: empirical evaluation} 

\begin{figure*}[!t]
    \centering
        {\includegraphics[width=0.23 \textwidth]{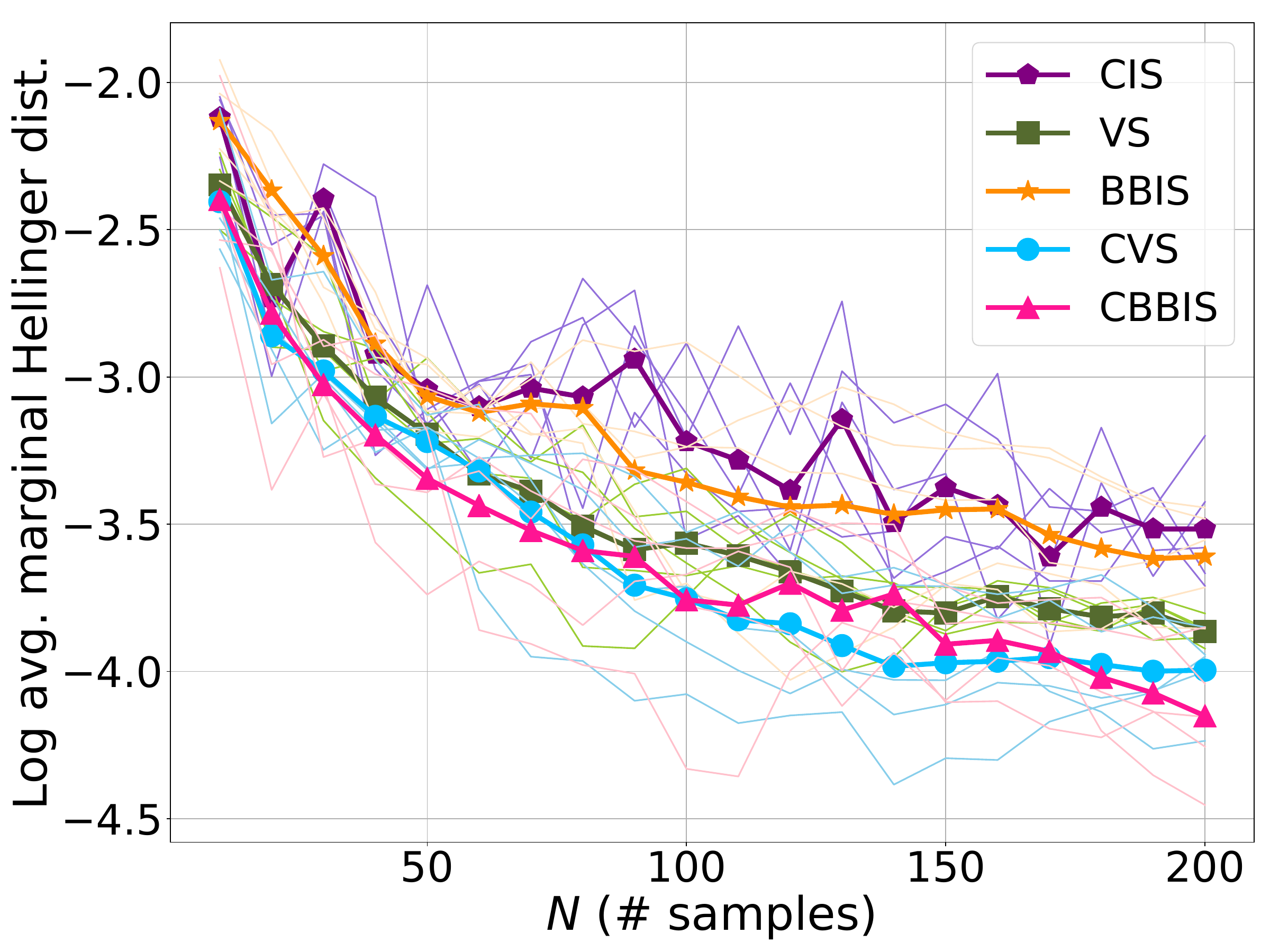}}
        {\includegraphics[width=0.23 \textwidth]{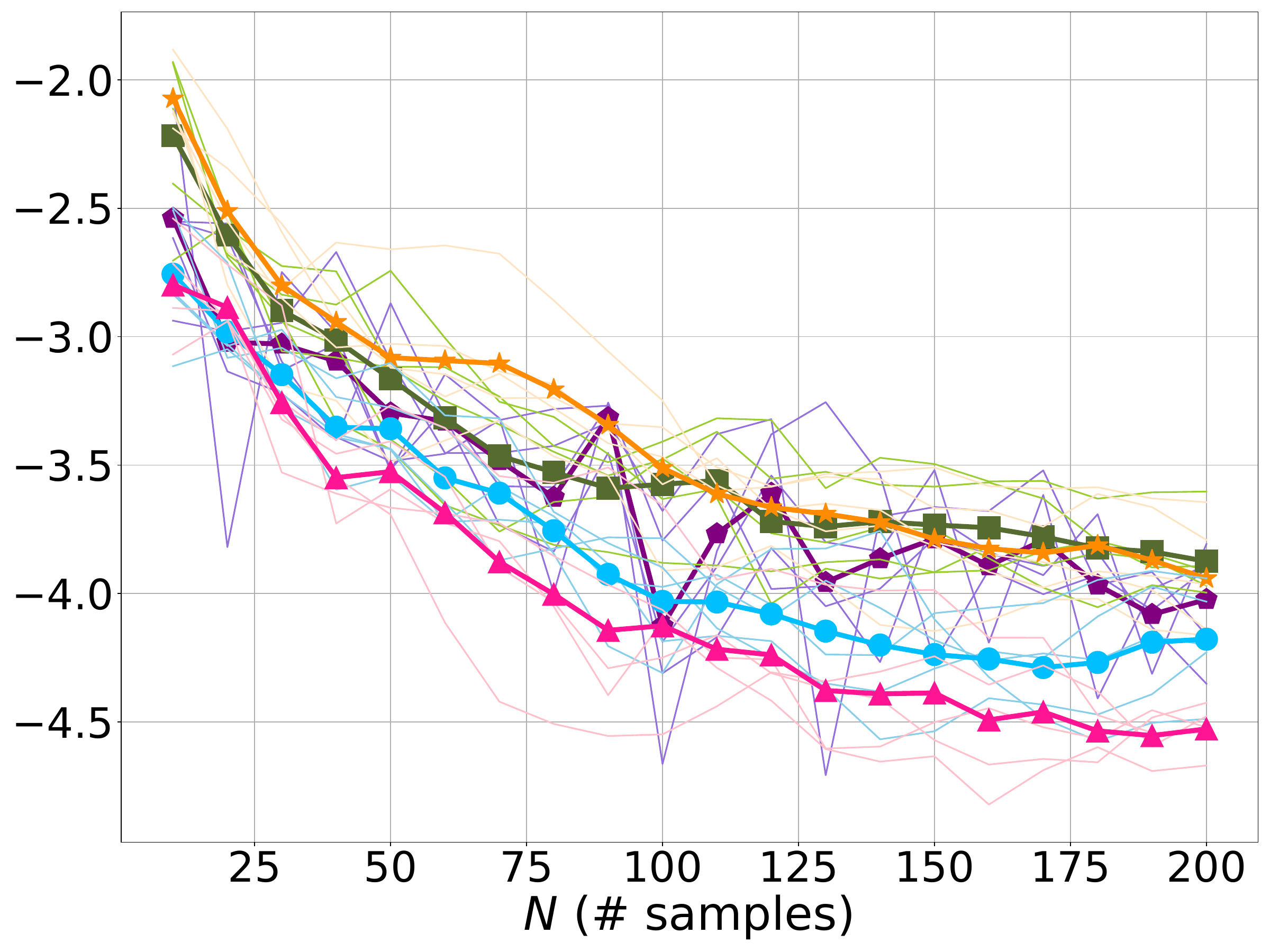}}
        {\includegraphics[width=0.23 \textwidth]{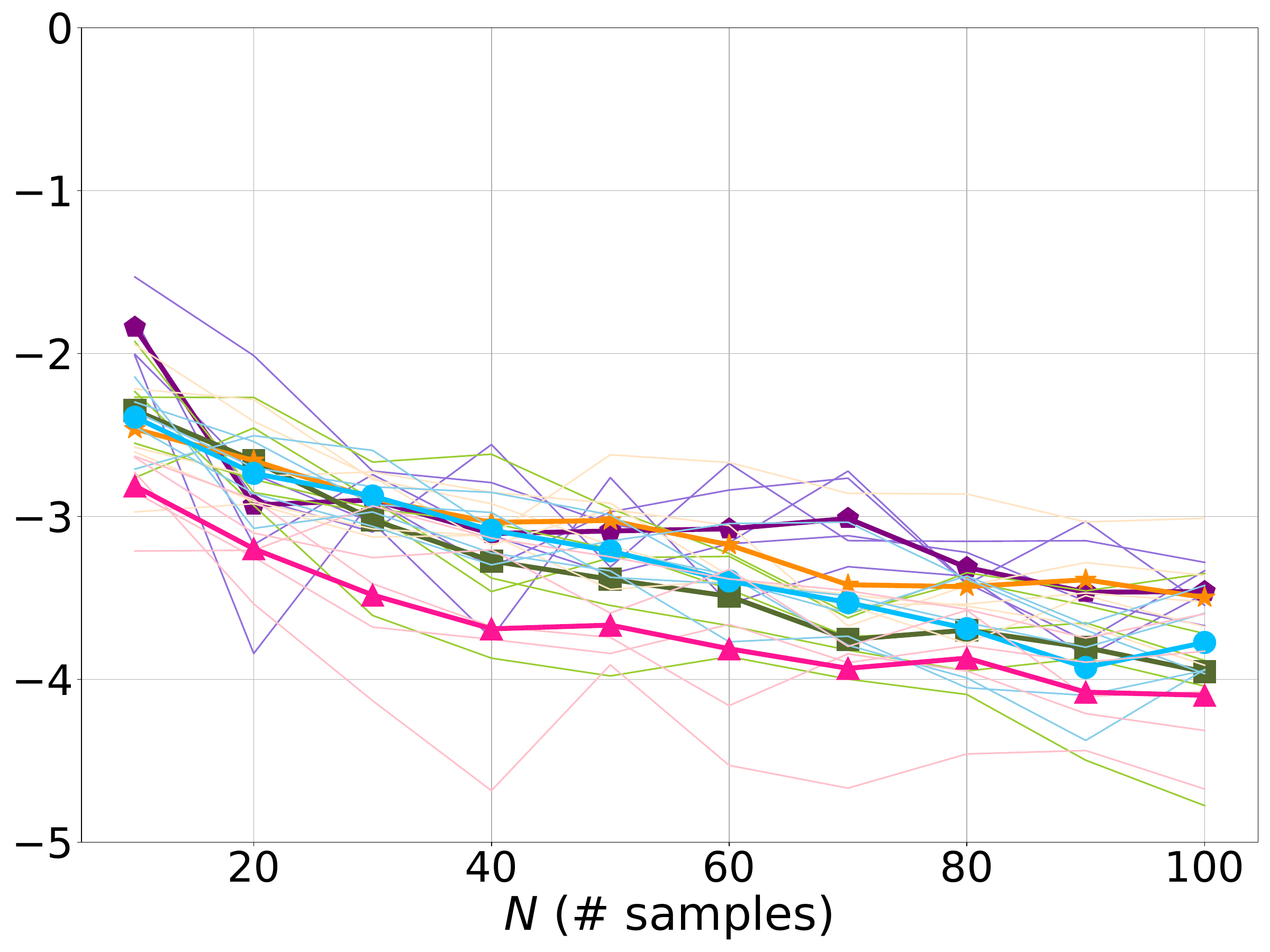}}
        {\includegraphics[width=0.23 \textwidth]{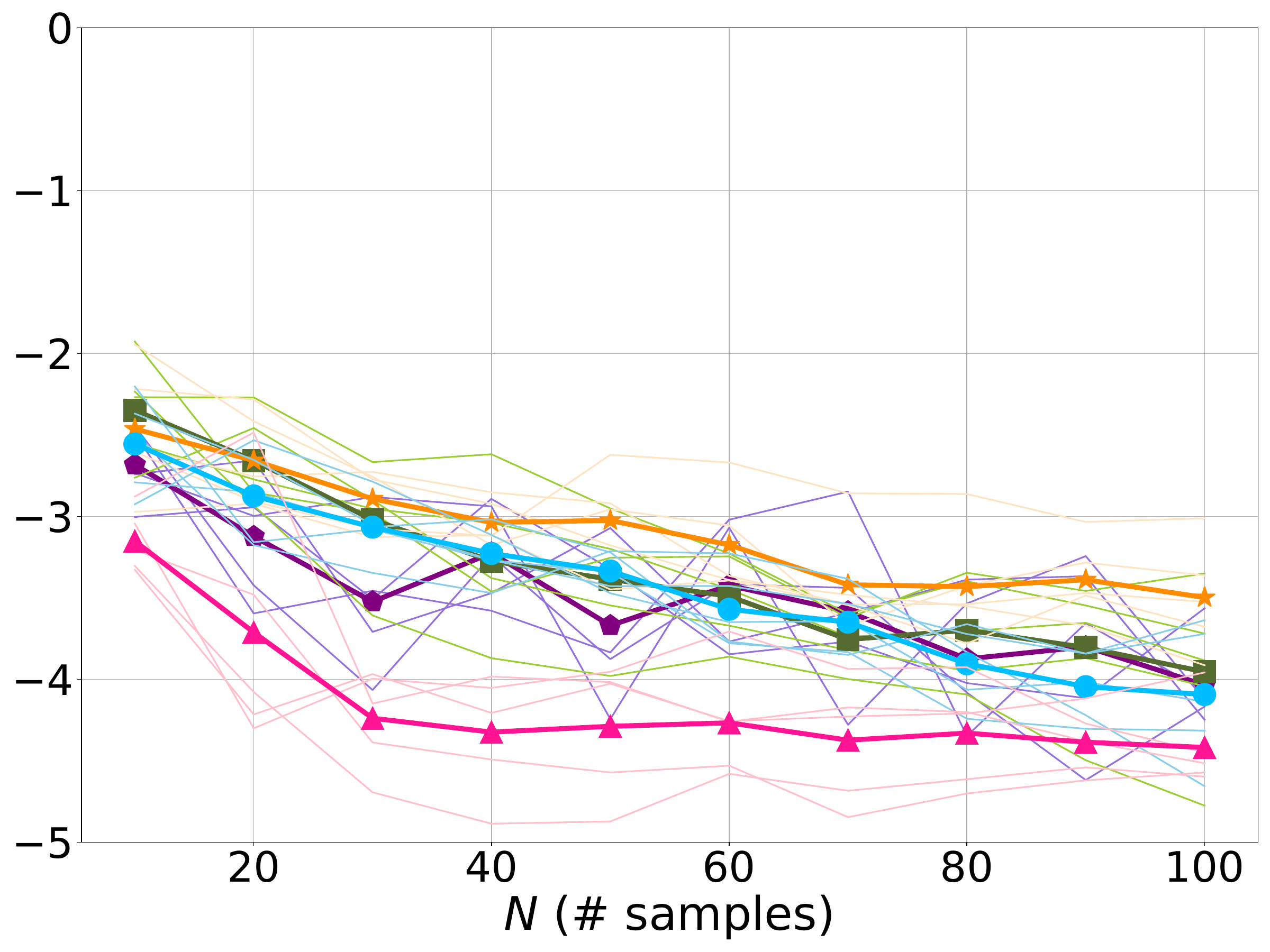}}
    \caption{Log average marginal Hellinger distance (y-axis) vs. different sample sizes ($N$, x-axis),
    evaluated on an Ising or ASIA model as a target distribution ($p$)
    with Gibbs chain as a proposal distribution ($q$). The target distribution and the percentage of collapsed variables are (from left to right): (Ising, $25\%$);  (Ising, $50\%$); (ASIA, $25\%$);  (ASIA, $50\%$).}
    \label{fig: svi-exp}
\end{figure*}

In this section, we empirically evaluate our two novel algorithms, and show how tractable expected kernels can benefit scenarios where the kernels serve as \emph{embedding for features} and \emph{embedding for distributions}.\footnote{Code for reproducing our empirical evaluation can be found at \url{github.com/UCLA-StarAI/ExpectedKernels}} 
We provide preliminary experiments to answer the following questions: 
\textbf{(Q1)} Do expected predictions at deployment time improve predictions over common imputation techniques to deal with missingness for SVR? 
\textbf{(Q2)} 
How is the performance of CBBIS when compared to other IS methods?
\textbf{(Q3)}
How much does collapsing more variables improve estimation quality?

\subsection{Regression Under Missing Data}

We compare our expected prediction with median imputation techniques 
and another natural and strong baseline: imputing missing values by MAP inference over the learned data distribution.
We evaluate all competitors on four common regression benchmarks from several domains following \citet{khosravi2019tractable}. 
For each benchmark, we adopt the \textsc{Strudel} algorithm~\citep{DangPGM20} to learn structured-decomposable and deterministic PCs from data to represent the data distributions. \textsc{Strudel} initializes from a Chow-Liu tree~\citep{chow1968approximating}. Then the structure learning is performed by doing heuristic-based greedy search over possible structures. Intuitively, it iteratively models the data with variable heuristic and edge heuristic. 
Recall from Section~\ref{sec: circuit representation} that deterministic PCs can perform exact MAP inference in polytime and thus the MAP imputation can be done tractably.

For the missingness setting, we assume data to be MCAR with missing probability $\pi \in \{0.1, 0.2, \dots, 0.9\}$, each of which reports the average result over five independent trials. 
We employ RBF kernels, which are naturally compatible with any structured-decomposable PCs (see Section~\ref{sec: tractable computation of expected kernels}).

Figure~\ref{fig: svr-exp} summarizes our results: we can answer \textbf{Q1} in a positive way since expected prediction performs equally well or better than other imputation methods. This is because expected prediction computes the exact expectation over expressive distributions while other imputation techniques consider a single possible completion and make additional restrictive distributional assumptions.

\subsection{Approximate inference via CBBIS}
\label{sec: inference performance}
We empirically evaluate our CBBIS scheme against different baselines on some synthetic benchmarks where we can exactly measure approximation quality.
For each baseline, we measure the quality of the estimated marginals for each variable against a ground truth target distribution represented as an Ising model on a $4\times4$ grid whose potentials have been randomly generated. To show our methods are suitable for different graph structures, we also test on the Bayesian network ASIA \citep{lauritzen1988local}.
We report in log scale the average Hellinger distance between estimated marginals and ground-truth marginals across all variables over five runs. 

We compare our proposed CBBIS in Algorithm~\ref{alg: cbbis-pc} against the following baselines: 
a vanilla Gibbs sampler~(VS),
a collapsed Gibbs sampling scheme~(CVS),
Categorical black-box importance sampling~(BBIS), 
and online collapsed importance sampling~(CIS) proposed by \citet{friedman2018approximate},\footnote{\url{github.com/UCLA-StarAI/Collapsed-Compilation}} cf.~\cref{sec: related}.
For both BBIS and CBBIS we use Gibbs chains as proposal mechanisms.
Note that CIS employs a different and adaptive proposal scheme where new samples and variables to be collapsed are heuristically selected by computing marginals via the SDD that compiles the collapsed distribution. 

For the kernel function in KDSD, we follow the kernel choice in \cite{yang2018goodness}, that is, the exponential Hamming kernel.
The quadratic programming problem to retrieve the optimal weights in BBIS and CBBIS is solved by CVXOPT~\citep{vandenberghe2010cvxopt}.
To obtain the PC representation of collapsed samples,
we use the compilation algorithm by \cite{shen2016tractable}
for collapsed samples in both CBBIS and CVS. 
The compilation step is fast.
For each collapsed sample, the compilation algorithm translates the conditional Ising model and the conditional Bayesian networks in our case to structured decomposable PCs within seconds.
For CIS, we adopt the default compilation algorithm in its implementation.
We collapse $25\%$ and $50\%$ of the variables for methods exploiting collapsed samples: CVS, CIS and CBBIS.
Figure~\ref{fig: svi-exp} summarizes our results: we can answer \textbf{Q2}
in a positive way since CBBIS performs equally well or better than other baselines.
Moreover, for \textbf{Q3}, we can see that methods with collapsed samples, CBBIS and CVS, outperform their non-collapsed counterparts, BBIS and VS respectively, i.e., collapsing helps boosting estimation.
It is more evident when collapsing half of the~variables.

\section{Conclusion}
\label{sec: conclusion}
We introduced kernel circuits, which enable us to derive the sufficient structural constraints for a tractable computation of expected kernels. We further demonstrate how this tractable computation gives rise to two novel kernel-embedding based algorithms.

\scalebox{0.01}{Æquis accipiunt animis donantve Corona}\vspace{-20pt}
\begin{acknowledgements}
This work is supported in part by NSF grants \#CCF-1837129, \#IIS-1956441, \#IIS-1943641, DARPA grant \#N66001-17-2-4032, a Sloan Fellowship, and gifts from Intel and Facebook Research. ZZ is supported by a NEC Student Research Fellowship.
\end{acknowledgements}

\bibliography{li_466}

\clearpage

\maketitle

\section{Proofs}
\label{sec: proofs}

We first present another hardness result about the computation of expected kernels besides Theorem~\ref{thm: hardness for expected kernels}.
\begin{thm}
There exist representations of distributions $p$ and $q$ that are smooth and compatible, yet computing the expected kernel of a simple kernel $k$ that is the Kronecker delta is already \#P-hard.
\end{thm}

\begin{proof}{(an alternative proof to the one in Section~\ref{sec: tractable computation of expected kernels})}
Consider the case when the positive definite kernel $k$ is a Kronecker delta function defined as $k(\x, \xp) = 1$ if and only if $\x = \xp$. 
Moreover, assume that the probabilistic circuit $p$ is smooth and decomposable, and that $q = p$.
Then computing the expected kernel is equivalent to computing the power of a probabilistic circuit $p$, that is,
$\doublesum_k(p, q) = \sum_{\x \in \xdomain} p^2(\x)$
with $\xdomain$ being the domain of variables $\X$.
\citet{vergari2021compositional} proves that the task of computing $\sum_{\x \in \xdomain} p^2(\x)$ is \#P-hard even when the PC $p$ is smooth and decomposable, which concludes our proof.
\end{proof}

\paragraph{Proposition~\ref{pro: recursive-sum-nodes}}
Let $\p_{n}$ and $\q_{m}$ be two compatible probabilistic circuits over variables $\X$ whose output units $n$ and $m$ are sum units,
denoted by
$\p_{n}(\X) = \sum_{i\in\ch(n)}\theta_{i}\p_{i}(\X)$ and 
$\q_{m}(\X) = \sum_{j\in\ch(m)}\delta_{j}\q_{j}(\X)$ respectively. 
Let $k_{l}$ be a kernel circuit with its output unit being a sum unit $l$, denoted by 
$k_l(\X) = \sum_{c \in \ch(l)} \gamma_{c} k_{c}(\X)$.
Then it holds that
\begin{equation}
    \doublesum_{k_l}(\p_n, \q_m) = 
    \sum_{i\in\ch(n)}\theta_{i}\sum_{j\in\ch(m)}\delta_{j} 
    \sum_{c \in \ch(l)} \gamma_{c}
    \,\doublesum_{k_c}(\p_i, \q_j).
\end{equation}

\begin{proof}
$\doublesum_{k_l}(\p_n, \q_m)$ can be expanded as
\begin{align*}
    &\quad\doublesum_{k_l}(p_n, q_m) \\
    &= \sum_\x \sum_{\xp} p_n(\x) q_m(\xp) k_l(\x,\xp) \\
    &= \sum_\x \sum_{\xp} \sum_{i\in\ch(n)}\theta_{i}\p_{i}(\x) \sum_{j\in\ch(m)}\delta_{j}\q_{j}(\xp) \sum_{c\in\ch(l)} \gamma_{c} k_c(\x,\xp) \\
    &= \sum_{i\in\ch(n)}\theta_{i}\sum_{j\in\ch(m)}\delta_{j} 
    \sum_{c \in \ch(l)} \gamma_{c}
    \,\doublesum_{k_c}(\p_i, \q_j).
\end{align*}
\end{proof}

\paragraph{Proposition~\ref{pro: recursive-product-nodes}}
Let $\p_n$ and $\q_m$ be two compatible probabilistic circuits over variables $\X$ whose output units $n$ and $m$ are product units, denoted by $\p_{n}(\X) = \p_{n_\leftn}(\X_{\leftn})\p_{n_\rightn}(\X_{\rightn})$ and $\q_{m}(\X) = \q_{m_\leftn}(\X_{\leftn})\q_{m_\rightn}(\X_{\rightn})$.
Let $k$ be a kernel circuit that is kernel-compatible with the circuit pair $\p_n$ and $\q_m$ with its output unit being a product unit denoted by
$k(\X, \X^\prime) = k_{\leftn}(\X_{\leftn}, \X_{\leftn}^\prime) k_{\rightn}(\X_{\rightn}, \X_{\rightn}^\prime)$.
Then it holds that
\begin{equation*}
    \doublesum_{k}(\p_n, \q_m) = \doublesum_{k_\leftn}(\p_{n_\leftn}, \q_{m_\leftn}) \cdot \doublesum_{k_\rightn}(\p_{n_\rightn}, \q_{m_\rightn}).
\end{equation*}

\begin{proof}
$\doublesum_{k}(p_n, q_m)$ can be expanded as
\begin{align*}
    &\quad\doublesum_{k}(p_n, q_m) \\
    &= \sum_\x \sum_{\xp} p_n(\x) q_m(\xp) k(\x,\xp) \\
    &= \sum_\x \sum_{\xp} \p_{m_\leftn}(\x_{\leftn})\p_{m_\rightn}(\x_{\rightn}) \q_{n_\leftn}(\x_{\leftn})\q_{n_\rightn}(\x_{\rightn}) k_{\leftn}(\x_\leftn,\xp_\leftn) k_{\rightn}(\x_\rightn,\xp_\rightn) \\
    &= \doublesum_{k_\leftn}(\p_{n_\leftn}, \q_{m_\leftn}) \cdot \doublesum_{k_\rightn}(\p_{n_\rightn}, \q_{m_\rightn}).
\end{align*}
\end{proof}

\paragraph{Corollary~\ref{cor: tractable mmd}.} 
Following the assumptions in Theorem~\ref{thm: double sum complexity},
the squared maximum mean discrepancy $\mmd[\mathcal{H}, p, q]$ in RKHS $\mathcal{H}$ associated with kernel $k$ as defined in \citet{gretton2012kernel} can be tractably computed.

\begin{proof}
This is an immediate result following Theorem~\ref{thm: double sum complexity} by rewriting MMD as defined in \citet{gretton2012kernel} in the form of a linear combination of expected kernels, that is,
$\mmd^2[\mathcal{H}, p, q] = \doublesum_k(p, p) + \doublesum_k(q, q) - 2 \doublesum_k(p, q)$.
\end{proof}

\paragraph{Corollary~\ref{cor: tractable kdsd}.}
Following the assumptions in Theorem~\ref{thm: double sum complexity}, if the probabilistic circuit $p$ further satisfies determinism,
the kernelized discrete Stein discrepancy~(KDSD)
$\mathbb{D}^2(q~\parallel~p) = \expec_{\x, \xp \sim q}[k_p(\x, \xp)]$
in the RKHS associated with kernel $k$ as defined in \citet{yang2018goodness} can be tractably computed.

Before showing the proof for Corollary~\ref{cor: tractable kdsd}, we first give definitions that are necessary for defining KDSD as follows to be self-contained.
\begin{mydef}[Cyclic permutation]
For a finite set $\xdomain$ and $D = |\xdomain|$, a cyclic permutation $\neg: \xdomain \rightarrow \xdomain$ is a bijective function such that for some ordering $a_1, a_2, \cdots, a_{D}$ of the elements in $\xdomain$, $\neg a_i = a_{(i + 1) \mod D}$, $\forall i = 1, 2, \cdots, D$.
\end{mydef}

\begin{mydef}[Partial difference operator]
\label{def: partial diff op}
For any function $f: \xdomain \rightarrow \mathbb{R}$ with $D = |\xdomain|$,
the partial difference operator is defined as
\begin{equation}
\Delta^*_i f(\X) := f(\X) - f(\neg_i \X), \forall i = 1, \cdots, D,
\end{equation}
with $\neg_i \X := (X_1, \cdots, \neg X_i, \cdots, X_D)$.
Moreover, the difference operator is defined as $\Delta^* f(\X) := (\Delta^*_1 f(\X), \cdots, \Delta^*_D f(\X))$.
Similarly, let $\invneg$ be the inverse permutation of $\neg$, and $\Delta$ denote the difference operator defined with respect to $\invneg$, i.e.,
\begin{equation*}
    \Delta_i f(\X) := f(\X) - f(\invneg_i \X), i = 1, \cdots, D.
\end{equation*}
\end{mydef}

\begin{mydef}[Difference score function]
The (difference) score function is defined as $\score(\X) := \frac{\Delta^* p(\X)}{p(\X)}$ on domain $\xdomain$ with $D = \mid \xdomain \mid$, a vector-valued function with its $i$-th dimension being
\begin{equation}
\scorei(\X) := \frac{\Delta^*_i p(\X)}{p(\X)} = 1 - \frac{p(\neg_i \X)}{p(\X)},
i = 1, 2, \cdots, D.
\end{equation}
\end{mydef}

Given the above definitions, 
the discrete \textit{Stein discrepancy} between two distributions $p$ and $q$ is defined as
\begin{equation}
    \mathbb{D}(q \parallel p) := \sup_{\vv{f} \in \mathcal{H}} \expec_{\x\sim q(\X)}[\steinop \vv{f}(\x)],
    \label{eq:sd}
\end{equation}
where $\vv{f}: \xdomain \rightarrow \mathbb{R}^D$ is a \textit{test} function, belonging to 
some function space $\mathcal{H}$ and $\steinop$ is the so-called \textit{Stein difference operator}, which is defined as 
\begin{equation}
    \steinop \vv{f} = \score(\x) \vv{f}^\top - \Delta \vv{f}(\x).
    \label{eq:disc-stein-op}
\end{equation}

If the function space $\mathcal{H}$ is an reproducing kernel Hilbert space (RKHS) on $\xdomain$ equipped with a kernel function $k(\cdot, \cdot)$, then a \textit{kernelized discrete Stein discrepancy}~(KDSD) is defined and admits a closed-form representation as
\begin{equation}
    \ksd(q \parallel p) 
    := \mathbb{D}^2(q \parallel p)
    = \expec_{\x, \xp \sim q}[k_p(\x, \xp)].
    \label{eq:ksd}
\end{equation}
Here, the kernel function $k_p$ is defined as
\begin{equation*}
\label{eq:kernel-p}
\begin{split}
    k_p(\x, \xp) 
    &= \score(\x)^\top k(\x, \xp) \score(\xp) - \score(\x)^\top \Delta^{\xp} k(\x, \xp) \\
    &\quad - \Delta^{\x} k(\x, \xp)^\top \score(\xp) + \trace(\Delta^{\x,\xp} k(\x, \xp)),
\end{split}
\end{equation*}
where the difference operator $\Delta^{\x}$ is as in Definition~\ref{def: partial diff op}. The superscript $\x$ specifies the variables that it operates on.

\begin{proof}{[Corollary~\ref{cor: tractable kdsd}]}
By the definition of difference score functions, the close form of KDSD can be further rewritten as follows.
\begin{equation}
\label{eq: rewrite kdsd}
\begin{split}
&\expec_{\x, \xp \sim q}[k_p(\x, \xp)] \\
= & \sum_{i = 1}^D \expec_{\x, \xp \sim q}[
\frac{p(\neg_i \x)p(\neg_i \xp)}{p(\x)p(\xp)} k(\x, \xp)
- \frac{p(\neg_i \x)}{p(\x)} k(\x, \neg_i \xp) \\
&\quad - \frac{p(\neg_i \xp)}{p(\xp)} k(\neg_i \x, \xp)
+ k(\neg_i \x, \neg_i \xp)
]\\
=& \sum_{i = 1}^D [ \doublesum_k(q\frac{\tilde{p}_i}{p}, q\frac{\tilde{p}_i}{p}) 
- \doublesum_k(q\frac{\tilde{p}_i}{p}, \tilde{q}_i) \\
&\quad - \doublesum_k(\tilde{q}_i, q\frac{\tilde{p}_i}{p})
+ \doublesum_k(\tilde{q}_i, \tilde{q}_i)]
\end{split}
\end{equation}
where $D$ denotes the cardinality of the domain of variables $\X$, the probablity $\tilde{p}_i(\X) := p(\neg_i \X)$ and the probablity $\tilde{q}_i(\X) := q(\neg_i \X)$.
Notice that the cyclic permutation $\neg_i$ operates on individual variable and the resulting PC $\tilde{p}_i$ and $\tilde{q}_i$ retains the same structure properties as PCs $p$ and $q$ respectively.
To prove that KDSD can be tratably computed, it suffices to prove that the expected kernel terms in Equation~\ref{eq: rewrite kdsd} can be tractably computed.

For a deterministic and structured-decomposable PC $p$, since PC $\tilde{p}_i$ retains the same structure, then resulting ratio $\tilde{p}_i / p$ is again a smooth circuit compatible with $p$ by \citet{vergari2021compositional}.
Moreover, since PC $p$ and $q$ are compatible, the circuit $\tilde{p}_i / p$ is compatible with PC $q$.
Thus, the resulting product $q\frac{\tilde{p}_i}{p}$ is a circuit that is smooth and compatible with both $p$ and $q$ by Theorem B.2 and thus compatible with $\tilde{q}_i$.
By similar arguments, we can verify that all the circuit pair in the expected kernel terms in Equation~\ref{eq: rewrite kdsd} satisfy the assumptions in Theorem~\ref{thm: double sum complexity} and thus they are amenable to the tractable computation we propose in Algorithm~\ref{alg: double-sum}, which finishes our proof.

\end{proof}

\paragraph{Proposition (convergence of Categorical BBIS).} Let $f(\x)$ be a test function. Assume that $f - \expec_p[f] \in \mathcal{H}_p$, with $\mathcal{H}_p$ being the RKHS associated with the kernel function $k_p$, and $\sum_{i} w_i = 1$, then it holds that
\begin{align*}
    \left| \sum_{n = 1}^N w_n f(x_n) - \expec_p f \right| \leq C_f \sqrt{\ksd(\{\x^{(n)}, w_n\} \parallel p)},
\end{align*}
where $C_f := \parallel f - \expec_p f \parallel_{\mathcal{H}_p}$.
Moreover, the convergence rate is $\bigO(N^{-1/2})$.

\begin{proof}
    Let $\hat{f}(\x) := f(\x) - \expec_p f $, then it holds that
    \begin{equation*}
        \begin{split}
            \left| \sum_{n = 1}^N w_n f(\x^\idx{n}) - \expec_p f \right|
            &= \left|  \sum_{n = 1}^N w_n \hat{f}(\x^\idx{n}) \right| \\
            &= \left|  \sum_{n = 1}^N w_n \langle \hat{f}, k_p(\cdot, \x^\idx{n}) \rangle \right| \\
            &= \left|  \langle \hat{f}, \sum_{n = 1}^N w_n  k_p(\cdot, \x^\idx{n}) \rangle_{\mathcal{H}_p} \right| \\
            &\leq \parallel \hat{f} \parallel_{\mathcal{H}_p} \cdot \parallel \sum_{n = 1}^N w_n  k_p(\cdot, \x^\idx{n}) \parallel_{\mathcal{H}_p} \\
            &= \parallel \hat{f} \parallel_{\mathcal{H}_p} \cdot \sqrt{\ksd(\{\x^\idx{n}, w_n \} \parallel p)}.
        \end{split}
    \end{equation*}
    We further prove the convergence rate of the estimation error by using the importance weights as reference weights.
    Let $v_n^* = \frac{1}{n} p(\x^\idx{n}) / q(\x^\idx{n})$. 
    Then $\ksd(\{\x^\idx{n}, v_n^*\} \parallel p)$ is a degenerate V-statistics~\citep{liu2016black} and it holds that $\ksd(\{\x^\idx{n}, v_n^*\} \parallel p) = \bigO(N^{-1})$.
    Moreover, we have that $\sum_{n = 1}^N v^*_n = 1 + \bigO(N^{- 1/2})$, which we denote by $Z$, i.e., $Z = \sum_{n = 1}^N v^*_n$.
    Let $w^*_n = v^*_n / Z$,
    then it holds that
    \begin{equation*}
        \begin{split}
            \ksd(\{\x^\idx{n}, w_n^*\} \parallel p) = \frac{\ksd(\{\x^\idx{n}, v_n^*\} \parallel p)}{Z^2} = \bigO(N^{-1}).
        \end{split}
    \end{equation*}
    Therefore, 
    \begin{equation*}
        \begin{split}
            \left| \sum_{n = 1}^N w_n f(\x^\idx{n}) - \expec_p f \right|
            &\leq \parallel \hat{f} \parallel_{\mathcal{H}_p} \cdot \sqrt{\ksd(\{\x^\idx{n}, w_n \} \parallel p)} \\
            &\leq \parallel \hat{f} \parallel_{\mathcal{H}_p} \cdot \sqrt{\ksd(\{\x^\idx{n}, w_n^* \} \parallel p)} \\
            &= \bigO(N^{- 1/2}).
        \end{split}
    \end{equation*}
\end{proof}

\paragraph{Proposition~\ref{pro: tractable conditonal kernel function}.} Let $p(\Xc \mid \xs)$ be a PC that encodes a conditional distribution over variables $\Xc$ conditioned on $\Xs = \xs$, and $k$ be a KC.
If the PC $p(\Xc \mid \xs)$ and $p(\Xc \mid \xs^\prime)$ are compatible and $k$ is kernel-compatible with the PC pair for any $\xs$, $\xs^\prime$, 
then the conditional kernel function $k_{p, \s}$ as defined in Proposition~\ref{pro: kdsd} can be tractably computed.

\begin{proof}
From Proposition~\ref{pro: kdsd}, $k_{p, \s}$ can be written as 
\begin{equation*}
    k_{p, \s} = \sum_{i=1}^D \expec_{\x_\co \sim p(\X_\co \mid \x_\s), \xp_\co \sim p(\X_\co \mid \xp_\s)} [k_{p,i}(\x,\xp)],
\end{equation*}
where $k_{p,i}$ can be expanded as follows.
\begin{equation*}
    \begin{split}
        k_{p,i}(\x,\xp) 
        = &\frac{p(\neg_i\x)p(\neg_i\xp)}{p(\x)p(\xp)}k(\x,\xp)
        - \frac{p(\neg_i\x)}{p(\x)}k(\x,\neg_i\xp) \\
        &- \frac{p(\neg_i\xp)}{p(\xp)}k(\neg_i\x,\xp)
        + k(\neg_i\x,\neg_i\xp).
    \end{split}
\end{equation*}
for any $i \in \co$, given that none of the variables in $\X_\s$ is flipped in the above formulation, kernel $k_{p,i}$ can be further written as
\begin{align*}
    k_{p,i}(\x,\xp) = &\frac{p(\neg_i\x_\co \mid \x_\s)p(\neg_i\xp_\co \mid \xp_\s)}{p(\x_\co \mid \x_\s)p(\xp_\co \mid \xp_\s)}k(\x,\xp) \\
        &- \frac{p(\neg_i\x_\co \mid \x_\s)}{p(\x_\co \mid \x_\s)}k(\x,\neg_i\xp) \\
        &- \frac{p(\neg_i\xp_\co \mid \xp_\s)}{p(\xp_\co \mid \xp_\s)}k(\neg_i\x,\xp) \\
        &+ k(\neg_i\x,\neg_i\xp).
\end{align*}
By substituting $k_{p,i}$ into the expected kernel in the expectation of $k_{p,i}$ with respect to the conditional distributions can be simplified to be a constant zero, that is,
\begin{align*}
    &\expec_{\x_{\co} \sim p(\X_\co \mid \x_\s), \xp_{\co} \sim p(\X^\prime_\co \mid \xp_\s) }[k_{p, i}(\x, \xp)] = ~0.
\end{align*}
Thus, $k_{p, \s}$ can be expanded as
\begin{align*}
    k_{p,\s}(\x,\xp) &=  \expec_{\x_\co \sim p(\X_\co \mid \x_\s), \xp_\co \sim p(\X_\co \mid \xp_\s)} [\sum_{i \in \s} k_{p,i}(\x,\xp)] \\
        &= \sum_{i\in\s} [\frac{p(\neg_i\x_\s)p(\neg_i\xp_\s)}{p(\x_\s)p(\xp_\s)} \cdot \doublesum_{k(\cdot,\cdot)} (p(\cdot\mid\neg_i\x_\s), p(\cdot\mid\neg_i\xp_\s)) \\
        &\quad-  \frac{p(\neg_i\x_\s)}{p(\x_\s)}\cdot\doublesum_{k(\cdot,\neg_i\cdot)} (p(\cdot\mid\neg_i\x_\s), p(\cdot\mid\xp_\s)) \\
        &\quad - \frac{p(\neg_i\xp_\s)}{p(\xp_\s)}\cdot\doublesum_{k(\neg_i\cdot,\cdot)}(p(\cdot\mid\x_\s),p(\cdot\mid\neg_i\xp_\s)) \\
        &\quad+ \doublesum_{k(\neg_i\cdot,\neg_i\cdot)}(p(\cdot\mid\x_\s),p(\cdot\mid\xp_\s))].
\end{align*}
As Theorem~\ref{thm: double sum complexity} has shown that $\doublesum_k(\p,\q)$ can be computed exactly in time linear in the size of each PC, $k_{p,\s}(\x,\xp)$ can also be computed exactly in time $\bigO(|\p_1||\p_2||k|)$, where $\p_1$ and $\p_2$ denote circuits that represent the conditional probability distribution given the index set, i.e., $p(\cdot \mid \x_\s)$ or $p(\cdot \mid \neg_i\x_\s)$.
\end{proof}

\section{Algorithms}

Algorithm~\ref{alg: bbis} summarizes how to perform the BBIS scheme we propose for Categorical distributions, and generate a set of weighted samples.
\begin{algorithm}[t]
\caption{\textsc{CategoricalBBIS}($p, q, k, n$)} 
\label{alg: bbis} 
\textbf{Input:} target distributions $p$ over variables $\X$, a black-box mechanism $q$, a kernel function $k$ and number of samples $n$ \\
\textbf{Output:} weighted samples $\{(\x^\idx{i}, w^*_i)\}_{i=1}^{n}$
\begin{algorithmic}[1]
\State Sample $\{\x^\idx{i}\}_{i=1}^n$ from $q$
\For{ $i = 1,\ldots,n$}
\For{ $j = 1,\ldots,n$}
\State $[\vv{K}_{p}]_{ij} = k_p(\x^\idx{i}, \x^\idx{j})$ 
\Comment cf.~\cref{eq: kernel-p}
\EndFor
\EndFor
\State $\vv{w}^* = \argmin_{\vv{w}} \left\{ \vv{w}^\top \vv{K_p} \vv{w} \,\middle\vert\, \sum_{i = 1}^n w_i = 1, ~w_i \ge 0 \right\}$
\State \Return $\{(\x^\idx{i}, w^*_i)\}_{i=1}^{n}$
\end{algorithmic} 
\end{algorithm}

\end{document}